\newtheorem{lemma}{Lemma}
\newtheorem{theorem}{Theorem}
\newtheorem{claim}{Claim}
\newtheorem*{lemma-appendix}{Lemma}
\title{Approximating Real-Time Recurrent Learning with Random Kronecker Factors}
\author{
  Asier Mujika \thanks{Author was supported by grant no. CRSII5\_173721  of the Swiss National Science Foundation.}\\
  Department of Computer Science\\
  ETH Zürich, Switzerland\\
  \texttt{asierm@inf.ethz.ch} \\
  \And
  Florian Meier \\
  Department of Computer Science\\
  ETH Zürich, Switzerland\\
  \texttt{meierflo@inf.ethz.ch} \\
    \And
  Angelika Steger \\
  Department of Computer Science\\
  ETH Zürich, Switzerland\\
  \texttt{steger@inf.ethz.ch} \\
}
\begin{document}

\maketitle

\begin{abstract}
 Despite all the impressive advances of recurrent neural networks,  {\em sequential} data is still in need of better modelling. Truncated backpropagation through time (TBPTT), the learning algorithm most widely used in practice, suffers from the truncation bias, which drastically limits its ability to learn long-term dependencies.The Real-Time Recurrent Learning algorithm (RTRL) addresses this issue,  but its high computational requirements  make it infeasible in practice. The Unbiased Online Recurrent Optimization algorithm (UORO) approximates RTRL with a smaller runtime and memory cost, but with the disadvantage  of obtaining noisy gradients that also limit its practical applicability. In this paper we propose the Kronecker Factored RTRL (KF-RTRL) algorithm that uses a Kronecker product decomposition to approximate the gradients for a large class of RNNs. We show that KF-RTRL is an unbiased and memory efficient online learning algorithm. Our theoretical analysis shows that, under reasonable assumptions, the noise introduced by our algorithm is not only stable over time but also asymptotically much smaller than the one of the UORO algorithm. We also confirm these theoretical results experimentally. Further, we show empirically that the KF-RTRL algorithm captures long-term dependencies and almost matches the performance of TBPTT on real world tasks by training Recurrent Highway Networks on a synthetic string memorization task and on the Penn TreeBank task, respectively.
These results indicate that RTRL based approaches might be a promising future alternative to TBPTT.

\end{abstract}

\section{Introduction}
Processing sequential data is a central problem in the field of machine learning. In recent years, Recurrent Neural Networks (RNN) have achieved great success, outperforming all other approaches in many different sequential tasks like machine translation, language modeling, reinforcement learning and more. 

Despite this success, it remains unclear how to train such models. The standard algorithm, Truncated Back Propagation Through Time (TBPTT) \cite{Williams90anefficient}, considers the RNN as a feed-forward model over time with shared parameters. While this approach works extremely well in the range of a few hundred time-steps,  it scales very poorly to longer time dependencies. As the time horizon is increased, the parameters are updated less frequently and more memory is required to store all past states. This makes TBPTT ill-suited for learning long-term dependencies in sequential tasks.

An appealing alternative to TBPTT is Real-Time Recurrent Learning  (RTRL) \cite{williams1989learning}. This algorithm allows online updates of the parameters and learning  arbitrarily long-term dependencies by exploiting the recurrent structure of the network for forward propagation of the gradient. Despite its impressive theoretical properties,  RTRL is impractical for  decently sized RNNs because run-time and memory costs scale poorly with network size.

As a remedy to this issue, \citet{tallec2017unbiased} proposed the Unbiased Online Recurrent Learning algorithm (UORO). This algorithm unbiasedly approximates the gradients, which reduces the run-time and memory costs such that they are similar to the  costs required to run the RNN forward. Unbiasedness is of central importance since it guarantees convergence to a local optimum.  Still, the variance of the gradients slows down learning.

Here we propose the Kronecker Factored RTRL (KF-RTRL) algorithm. This algorithm builds up on the ideas of the UORO algorithm, but uses Kronecker factors for the RTRL approximation. We show both theoretically and empirically that this drastically reduces the noise in the approximation and greatly improves learning. However, this comes at the cost of requiring more computation and only being applicable to a class of RNNs. Still, this class of RNNs is very general and includes Tanh-RNN and Recurrent Highway Networks \cite{zilly2016recurrent} among others.

The main contributions of this paper are:

\begin{itemize}
\item We propose the KF-RTRL online learning algorithm.
\item We theoretically prove that our algorithm is unbiased and under reasonable assumptions the noise is stable over time and asymptotically by a factor $n$ smaller than the noise of UORO.
\item We test KF-RTRL on a binary string memorization task where our networks can learn dependencies of up to $36$ steps.
\item We evaluate in character-level Penn TreeBank, where the performance of KF-RTRL almost matches the one of TBPTT for $25$ steps.
\item We empirically confirm that the variance of KF-RTRL  is stable over time and that increasing the number of units does not increase the noise significantly.
\end{itemize}

%We provide our code on the supplementary material for reproducibility of the experiments.

\section{Related Work}\label{sec:rel_work}

Training Recurrent Neural Networks for finite length sequences is currently almost exclusively done using BackPropagation Through Time \cite{rumelhart1986learning} (BPTT). The network is "unrolled" over time and is considered as a feed-forward model with shared parameters (the same parameters are used at each time step). Like this, it is easy to do backpropagation and exactly calculate the gradients in order to do gradient descent. 

However, this approach does not scale well to very long sequences, as the whole sequence needs to be processed before calculating the gradients, which makes training extremely slow and very memory intensive. In fact, BPTT cannot be applied to an online stream of data. In order to circumvent this issue, Truncated BackPropagation Through Time \cite{Williams90anefficient} (TBPTT) is used generally. The RNN is only "unrolled" for a fixed number of steps (the truncation horizon) and gradients beyond these steps  are ignored. Therefore, if the truncation horizon is smaller than the length of the dependencies needed to solve a task, the network cannot learn it. 

Several approaches have been proposed to deal with the truncation horizon. Anticipated Reweighted Truncated Backpropagation \cite{tallec2017unbiasing} samples different truncation horizons and weights the calculated gradients such that the expected gradient is that of the whole sequence. \citet{jaderberg2016decoupled} proposed  Decoupled Neural Interfaces, where the network learns to predict incoming gradients from the future. Then, it uses these predictions  for learning. The main assumption of this model is that all future gradients can be computed as a function of the current hidden state.

A more extreme proposal is calculating the gradients forward and not doing any kind of BPTT. This is known as Real-Time Recurrent Learning \cite{williams1989learning} (RTRL). RTRL allows updating the model parameters online after observing each input/output pair; we explain it in detail in Section~\ref{sec:rtrl}. However, its  large running time of order $O(n^{4})$ and memory requirements of order $O(n^3)$, where $n$ is the number of units of a fully connected RNN, make it unpractical for large networks. To fix this, \citet{tallec2017unbiased} presented the Unbiased Online Recurrent Optimization (UORO) algorithm. This algorithm approximates RTRL using a low rank matrix. This makes the run-time of the algorithm of the same order as a single forward pass in an RNN,  $O(n^2)$. However, the low rank approximation introduces a lot of variance, which negatively affects learning as we show in Section \ref{sec:experiments}.

Other alternatives are Reservoir computing approaches \cite{lukovsevivcius2009reservoir} like Echo State Networks \cite{jaeger2001echo} or Liquid State Machines \cite{maass2002real}. In these approaches, the recurrent weights are fixed and only the output connections are learned. This allows online learning, as gradients do not need to be propagated back in time. However, it prevents any kind of learning in the recurrent connections, which makes the RNN computationally much less powerful. 

\section{Real-Time Recurrent Learning and UORO} \label{sec:rtrl}

RTRL \cite{williams1989learning} is an online learning algorithm for RNNs. Contrary to  TBPPT, no previous inputs or network states need to be stored. At any time-step $t$, RTRL only requires the hidden state $h_t$, input $x_t$ and  $\frac{d h_{t-1}}{d \theta}$ in order to compute $\frac{d h_t}{d \theta}$. With $\frac{d h_t}{d \theta}$ at hand, $\frac{d L_t}{d \theta} = \frac{d L_t}{d h_{t}} \frac{d h_{t}}{d \theta}$ is obtained by applying the chain rule. Thus, the parameters can be updated online, that is, for each observed input/output pair one parameter update can be performed.
 
In order to present the RTRL update precisely, let us first define an RNN formally. An RNN  is a differentiable function $f$, that maps an input $x_t$, a hidden state $h_{t-1}$ and parameters $\theta$ to the next hidden state $h_{t}=f(x_t, h_{t-1}, \theta)$. At any time-step $t$, RTRL computes $\frac{d h_t}{d \theta}$ by applying the chain rule:

\begin{align}
\frac{d h_t}{d \theta} &= \frac{\partial h_t}{\partial h_{t-1}} \frac{d h_{t-1}}{d \theta} +     \frac{\partial h_t}{\partial x_t} \frac{d x_t}{d \theta} +  \frac{\partial h_t}{\partial \theta} \label{eq1}\\[1em]
  &= \frac{\partial h_t}{\partial h_{t-1}} \frac{d h_{t-1}}{d \theta} +
  \frac{\partial h_t}{\partial \theta} \ ,
\end{align}

where the middle term vanishes  because we assume that the inputs do not depend on the parameters. For  notational simplicity, define $G_t \coloneqq \frac{d h_t}{d \theta}$, $H_t \coloneqq \frac{\partial h_t}{\partial h_{t-1}}$ and $F_t \coloneqq \frac{\partial h_t}{\partial \theta}$, which reduces the above equation to

\begin{align} 
\frac{d h_t}{d \theta} = G_t = H_t G_{t-1} + F_t \ . \label{eq3}
\end{align}

Both $F_t$ and $H_t$ are straight-forward to compute for RNNs. We assume $h_0$ to be fixed, which implies $G_0 = 0$. With all this, RTRL obtains the exact gradient $G_t$ for each time-step and enables online updates of the parameters. However, updating the parameters means that $G_t$ is only exact in the limit were the learning rate is arbitrarily small. This is because the $\theta$ that was used for computing $G_t$ is different from the $\theta$ after the parameter update. In practice learning rates are sufficiently small such that this is not an issue.

The downside of RTRL is that for a fully connected RNN with $n$ units the matrices $H_t$ and $G_t$ have size $n\times n$ and $n\times n^2$, respectively. Therefore, computing Equation \ref{eq3} takes $\mathcal{O}(n^{4})$ operations and requires $O(n^3)$ storage, which makes RTRL impractical for large networks.

The UORO algorithm \cite{tallec2017unbiased} addresses this issue and reduces run-time and memory requirements to $O(n^2)$ at the cost of obtaining an unbiased but noisy estimate of $G_t$. 
More precisely, the UORO algorithm keeps an unbiased rank-one estimate of $G_t$ by approximating $G_t$ as the outer product $vw^T$  of two vectors of size $n$ and size $n^2$, respectively. At any time $t$, the UORO update consists of two approximation steps. Assume that the unbiased approximation $\hat{G}_{t-1}=v w^T$ of $G_{t-1}$ is given. First, $F_t$  is approximated by a rank-one matrix. Second, the sum of two rank-one matrices $H_t\hat{G}_{t-1}+F_t$ is approximated by a rank-one matrix yielding the estimate $\hat{G}_t$ of $G_t$. The estimate $\hat{G}_t$ is provably unbiased and the UORO update requires the same run-time and memory as updating the RNN \cite{tallec2017unbiased}.

\section{Kronecker Factored RTRL}\label{sec:KF-RTRL}
Our proposed Kronecker Factored RTRL algorithm (KF-RTRL) is an online learning algorithm for RNNs, which does not require storing any previous inputs or network states.
KF-RTRL approximates $G_t$, which is  the derivative of the internal state with respect to the parameters, see Section \ref{sec:rtrl}, by a Kronecker product.
The following theorem shows that the KF-RTRL algorithm satisfies various desireable properties.

\begin{theorem}\label{thm:KF_RTRL_properties}
For the class of RNNs defined in Lemma \ref{KF-lemma}, the estimate $G'_t$ obtained by the KF-RTRL algorithm  satisfies 
\begin{enumerate}
\item $G_t'$ is an unbiased estimate of $G_t$, that is $\mathbb{E}[G'_t]= G_t$, and
\item assuming that the spectral norm of $H_t$ is at most $1-\epsilon$ for some arbitrary small $\epsilon > 0$, then at any time $t$, the mean of the variances of the entries of $G'_t$ is of order $O(n^{-1})$.
\end{enumerate}
Moreover, one time-step of the KF-RTRL algorithm requires $O(n^3)$ operations and $O(n^2)$ memory.
\end{theorem}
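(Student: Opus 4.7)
The plan is to prove the three assertions (unbiasedness, variance bound, and computational cost) largely in sequence, exploiting the Kronecker structure that Lemma \ref{KF-lemma} guarantees. I will write the algorithm's state as $G'_t = u_t \otimes A_t$, where $u_t \in \mathbb{R}^n$ and $A_t \in \mathbb{R}^{n\times n}$ (or whatever shape the referenced lemma dictates), and I will assume inductively that $\mathbb{E}[G'_{t-1}] = G_{t-1}$. By Lemma \ref{KF-lemma}, $F_t$ itself factorizes as a Kronecker product. Multiplying $u_{t-1}\otimes A_{t-1}$ by $H_t$ on the left preserves the Kronecker factorization in its second factor, so the ideal update $H_t G'_{t-1} + F_t$ is a sum of two Kronecker products. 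The core of the algorithm is then a randomized rule that collapses this sum back into a single Kronecker product, and my job is to show this randomization preserves the expectation while only adding a controlled amount of variance.

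For unbiasedness, I would carry out the induction step using the tower property. Condition on the history up to time $t-1$; then $G'_{t-1}$ is fixed and the new randomness (the projection used to merge two Kronecker products into one) is independent of the past. I would verify by direct computation that the merged object has conditional expectation $H_t G'_{t-1} + F_t$. Taking outer expectation and invoking the inductive hypothesis plus linearity gives $\mathbb{E}[G'_t] = H_t G_{t-1} + F_t = G_t$, matching the RTRL recursion \eqref{eq3}.

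The variance analysis is the main obstacle and will occupy most of the proof. Writing $\Sigma_t$ for the total variance $\sum_{i,j}\mathrm{Var}((G'_t)_{ij}) = \mathbb{E}\|G'_t - G_t\|_F^2$, I would derive a one-step recursion of the form
\begin{equation*}
\Sigma_t \;\le\; \|H_t\|^2\,\Sigma_{t-1} \;+\; C_t,
\end{equation*}
where $C_t$ is the fresh variance introduced by the random Kronecker merge. Two things need to be shown here. First, the fresh-variance term $C_t$ scales like $O(n)$ in the Frobenius norm (i.e.\ $O(n^{-1})$ per entry of the $n\times n^2$ matrix), because the random projection acts on vectors of length $n$ and the Kronecker parametrization spreads the noise across $n^2$ coordinates rather than concentrating it on a single rank-one direction as UORO does; this is precisely where the claimed factor-$n$ gain over UORO comes from, and I would make it explicit by computing the second moment of the merged factors in closed form. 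Second, I would iterate the recursion: the spectral-norm assumption $\|H_t\|\le 1-\epsilon$ turns it into a geometric series bounded by $C_t/(1-(1-\epsilon)^2) = O(n)$ total, hence $O(n^{-1})$ per entry on average, uniformly in $t$.

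The complexity claim is the easiest. The state $(u_t, A_t)$ has $n + n^2 = O(n^2)$ entries, so memory is $O(n^2)$. The per-step cost is dominated by (i) forming $H_t A_{t-1}$ or the analogous product, an $n\times n$ by $n\times n$ matrix multiplication in $O(n^3)$, (ii) evaluating the Kronecker factors of $F_t$ given by Lemma \ref{KF-lemma} in $O(n^2)$, and (iii) drawing the $O(n)$-dimensional random vector and computing the merged factors, which is $O(n^2)$. No step is worse than $O(n^3)$, so the total is $O(n^3)$, completing the theorem. The genuinely delicate step, which I would allocate the bulk of the appendix to, is the exact form of the one-step variance increment $C_t$ and the verification that the constants hidden in the $O(n)$ bound do not secretly depend on $t$ through the norms of $u_{t-1}, A_{t-1}$; this is where a careful choice of the rescaling constants in the random merge (analogous to UORO's variance-minimizing rescaling) enters.
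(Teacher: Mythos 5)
Your proposal follows essentially the same route as the paper's proof: unbiasedness by induction over the random-sign merge (the paper's Lemma \ref{2to1-lemma}), a variance recursion of the form $\textrm{Var}[\hat{G}_t] \le (1-\epsilon)^2\,\textrm{Var}[\hat{G}_{t-1}] + \mathbb{E}\left[\|B\|^2\right]$ closed up as a geometric series under the spectral-norm assumption, and the same $O(n^3)$ run-time / $O(n^2)$ memory accounting dominated by the product $H_tA_{t-1}$. The one step you defer---verifying that the fresh-noise term does not grow with $t$ through $\|u_{t-1}\|$ and $\|A_{t-1}\|$---is exactly the paper's Claim \ref{claim:utAt} ($\|u_t\|\|A_t\| \le 4C_1C_2/\epsilon^2$), which it proves by a separate induction relying on precisely the mechanism you name: the variance-minimizing choice of $p_1,p_2$ combined with the contraction $\|H_tA_{t-1}\| \le (1-\epsilon)\|A_{t-1}\|$; note also that your totals should read $O(n^2)$ (not $O(n)$) for the summed variance, which over the $\Theta(n^3)$ entries of $G'_t$ still gives the claimed $O(n^{-1})$ per-entry bound.
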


We remark that the class of RNNs defined in Lemma \ref{KF-lemma} contains many widely used RNN architectures like Recurrent Highway Networks and Tanh-RNNs, and can be extended to include LSTMs \cite{hochreiter1997long}, see Section \ref{KF-LSTM}.  Further, the assumption that the spectral norm of $H_t$ is at most $1-\epsilon$ is reasonable, as otherwise gradients might grow exponentially as noted by \citet{bengio1994learning}. Lastly, the bottleneck of the algorithm is a matrix multiplication and thus for sufficiently large matrices an algorithm with a better run time than $O(n^3)$ may be be practical.

In the remainder of this section, we  explain the main ideas behind the KF-RTRL algorithm (formal proofs are given in the appendix). In the subsequent Section \ref{sec:experiments} we  show that these theoretical properties  carry over into practical application.
 KF-RTRL is well suited for learning long-term dependencies (see Section \ref{sec:copy-task}) and almost matches the performance of TBPTT on a complex real world task, that is, character level language modeling (see Section \ref{sec:penn-treebank}). Moreover, we confirm empirically that the variance of the KF-RTRL estimate is stable over time and scales well as the network size increases (see Section \ref{sec:empirical-variance}).

Before giving the theoretical background and motivating the key ideas of KF-RTRL, we give a brief overview of the KF-RTRL algorithm. 
At any time-step $t$, KF-RTRL maintains a vector $u_t$ and a matrix $A_t$, such that $G'_t= u_t\otimes A_t$ satisfies $\mathbb{E}[G'_t]=G_t$.
Both $H_tG_{t-1}'$ and $F_t$ are  factored  as a Kronecker product, and then the sum of these two Kronecker products is approximated by one Kronecker product.
This approximation step (see Lemma \ref{2to1-lemma}) works analogously to the second approximation step of the UORO algorithm (see rank-one trick, Proposition $1$ in \cite{tallec2017unbiased}).
The detailed algorithmic steps of KF-RTRL are presented in Algorithm \ref{alg:uoro} and motivated below.

\begin{algorithm}

\caption{--- One step of KF-RTRL (from time $t-1$ to $t$)}
\label{alg:uoro}

\begin{algorithmic}

\STATE {\bfseries Inputs:}\\

\begin{itemize}

\item[--] input $x_t$, target $y_t$, previous recurrent state $h_{t-1}$ and parameters $\theta$
\item[--] $u_{t-1}$ and $A_{t-1}$ such that $\mathbb{E} \left[ u_{t-1} \otimes A_{t-1} \right] = G_{t-1} $
\item[--] $SGDopt$ and $\eta_{t+1}$: stochastic optimizer and its learning rate

\end{itemize}

\STATE {\bfseries Outputs:}\\

\begin{itemize}

\item[--] new recurrent state $h_{t}$ and updated parameters $\theta$
\item[--] $u_{t}$ and $A_{t}$ such that $\mathbb{E} \left[ u_{t} \otimes A_{t} \right] = G_{t} $
\end{itemize}
\vspace{1mm}
\STATE {\bfseries /* Run one step of the RNN and compute the necessary matrices*/}
%\STATE
\STATE $z^k_j \leftarrow $ Compute linear transformations using $x_t$, $h_{t-1}$ and $\theta$ \\
\STATE $h_t \leftarrow $ Compute $h_t$ using using point-wise operations over the $z^k_j$ \\ \STATE  $\hat{h}_{t-1} \leftarrow $ Concatenate $h_{t-1}$ and $x_t$ 
\STATE $D_{jj}^k\leftarrow\frac{\partial (h_t)_j}{\partial z_j^k}$ \hspace*{2em} $H \leftarrow \frac{\partial h_t}{\partial h_{t-1}}$ \hspace*{2em} $H' \leftarrow H \cdot A_{t-1}$

\vspace{1mm}
\STATE {\bfseries /* Compute variance minimization and random multipliers */}
%\STATE
\STATE $p_1 \leftarrow \sqrt{\|H'\|_{HS}/\|u_{t-1}\|_{HS}}$ \hspace*{2em} $p_2 \leftarrow \sqrt{\|D\|_{HS}/\|\hat{h}_{t-1}\|_{HS}}$

\STATE $c_1, c_2 \leftarrow $ Independent random signs

\vspace{1mm}
\STATE {\bfseries /* Compute next approximation */}
%\STATE

\STATE $u_t \leftarrow c_{1} p_{1} u_{t-1} + c_{2} p_{2} \hat{h}_{t-1}$ \hspace*{2em} $A_t \leftarrow c_{1} \frac{1}{p_{1}}   H'  + c_{2} \frac{1}{p_{2}} D$

\vspace{1mm}
\STATE {\bfseries /* Compute gradients and update parameters */} \hspace*{2em}
\STATE $L_{grad} \leftarrow u_t \otimes \left( \frac{\partial L(y_t, h_t) }{\partial h_t} \cdot A_t \right)$ \hspace*{2em} $SGDopt(L_{grad}, \eta_{t+1}, \theta)$

\end{algorithmic}
\end{algorithm}

\subsubsection*{Theoretical motivation of the KF-RTRL algorithm}

The key observation that motivates our algorithm  is that for many popular RNN architectures $F$ can be exactly decomposed as the Kronecker product of a vector and a diagonal matrix, see Lemma \ref{KF-lemma}. In the following Lemma, we show that this property is satisfied by a large class of RNNs that include many popular RNN architectures like Tanh-RNN and Recurrent Highway Networks. Intuitively, an RNN of this class computes several linear transformations (corresponding to the matrices $W^1, \ldots, W^r$) and merges the resulting vectors through pointwise non-linearities. For example, in the case of RHNs, there are two linear transformations that compute the new candidate cell and the forget gate, which then are merged through pointwise non-linearities to generate the new hidden state.
\begin{lemma}
		Assume the learnable parameters $\theta$ are a set of matrices $W^1, \ldots, W^{r}$, let $\hat{h}_{t-1}$ be the hidden state $h_{t-1}$ concatenated with the input  $x_t$ and let $z^k = \hat{h}_{t-1} W^k$ for $k=1, \ldots, r$. 
    Assume that $h_t$ is obtained by point-wise operations over the $z^k$'s, that is, $(h_{t})_j = f(z^1_j, \ldots , z^{r}_j)$, where $f$ is such that $\frac{\partial f}{\partial z^k_j}$ is bounded by a constant. 
    Let $D^k \in \mathbb{R}^{n\times n}$ be the diagonal matrix defined by $D_{jj}^k = \frac{\partial (h_t)_j}{\partial z^k_j}$,  and let $D = \left( D^1 | \ldots | D^{r} \right)$. Then, it holds $\frac{\partial h_t}{\partial \theta} = \hat{h}_{t-1} \otimes D$.
    \label{KF-lemma}
\end{lemma}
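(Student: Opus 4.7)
The plan is to compute $\partial h_t/\partial \theta$ entry-wise by treating $\theta$ as the concatenation of the flattened matrices $W^1,\ldots,W^r$, and then read off the Kronecker structure directly from the resulting expression. The key observation is that $h_t$ depends on $\theta$ only through the preactivations $z^1,\ldots,z^r$, and the dependence of $(h_t)_j$ on those preactivations is purely coordinate-wise; this decouples the two sets of indices needed to form the Kronecker product.

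Concretely, I would fix indices $j,k,a,b$ and apply the chain rule:
\[
\frac{\partial (h_t)_j}{\partial W^k_{ab}} \;=\; \sum_{\ell} \frac{\partial (h_t)_j}{\partial z^k_\ell}\,\frac{\partial z^k_\ell}{\partial W^k_{ab}}.
\]
By the point-wise hypothesis $(h_t)_j = f(z^1_j,\ldots,z^r_j)$, only the term $\ell=j$ survives, contributing $D^k_{jj}$. Since $z^k_\ell = \sum_i \hat{h}_{t-1,i}\,W^k_{i\ell}$, the derivative $\partial z^k_\ell/\partial W^k_{ab}$ equals $\hat{h}_{t-1,a}$ when $\ell=b$ and vanishes otherwise. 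Combining these yields
\[
\frac{\partial (h_t)_j}{\partial W^k_{ab}} \;=\; \hat{h}_{t-1,a}\,\mathbb{1}[b=j]\,D^k_{jj}.
\]
(The boundedness of $\partial f/\partial z^k_j$ assumed in the statement is only used to guarantee that these partial derivatives exist and behave nicely; it plays no role in the algebraic identity.)

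Finally, I would match this expression to the Kronecker product $\hat{h}_{t-1}\otimes D$, where $D=(D^1\mid\cdots\mid D^r)\in\mathbb{R}^{n\times rn}$. Viewing $\hat{h}_{t-1}$ as a row vector and ordering the columns of $\hat{h}_{t-1}\otimes D$ lexicographically by $(a,k,b)$, the $(j,(a,k,b))$ entry is $\hat{h}_{t-1,a}\cdot D^k_{jb}$, and because $D^k$ is diagonal this equals $\hat{h}_{t-1,a}\,\mathbb{1}[b=j]\,D^k_{jj}$, matching the chain-rule computation above. The only real obstacle is a bookkeeping one: one must pin down the flattening of $\theta$ implicit in the notation $\partial h_t/\partial\theta$ so that it is consistent with the column-ordering convention of the Kronecker product; once a convention is fixed, the identity follows immediately from the chain rule and no further analytic work is required.
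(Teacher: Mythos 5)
Your proposal is correct and follows essentially the same route as the paper's proof: an entry-wise chain-rule computation in which the point-wise structure of $f$ kills all but the $\ell=j$ term, the linearity $z^k_\ell=\sum_i \hat h_{t-1,i}W^k_{i\ell}$ yields the factor $\hat h_{t-1,a}\,\mathbb{1}[b=j]$, and the lexicographic ordering of the parameters (the paper's ordering in $i,k,j$ matches your $(a,k,b)$) identifies the result with $\hat h_{t-1}\otimes D$. Your explicit matching of the $(j,(a,k,b))$ entry of the Kronecker product, and your remark that boundedness of $\partial f/\partial z^k_j$ is irrelevant to the algebraic identity, are minor additions to what the paper leaves implicit.
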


Further, we observe that the sum of two Kronecker products can be approximated by a single Kronecker product in expectation. The following lemma, which is the analogue of Proposition $1$ in \cite{ollivier2015training} for Kronecker products, states how this is achieved.

\begin{lemma}
	Let $C = A_1 \otimes B_1 + A_2 \otimes B_2$, where the matrix $A_1$ has the same size as the matrix $A_2$ and $B_1$ has the same size as $B_2$. Let $c_1$ and $c_2$ be chosen independently and uniformly at random from $\{-1, 1\}$ and let $p_1, p_2 > 0$ be positive reals. Define $ A' = c_1 p_1 A_1 + c_2 p_2 A_2$ and $ B' = c_1 \frac{1}{p_1} B_1 + c_2 \frac{1}{p_2} B_2$. Then, $A' \otimes B'$ is an unbiased approximation of $C$, that is $  \mathbb{E}\left[A' \otimes B' \right] = C $.  Moreover, the variance of this approximation is minimized by setting the free parameters $p_i = \sqrt{||B_i|| / ||A_i||}$.
    
    \label{2to1-lemma}
\end{lemma}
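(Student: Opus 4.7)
The plan is to verify unbiasedness by expanding $A'\otimes B'$ via bilinearity of the Kronecker product, and to handle the variance minimization by reducing everything to a single AM--GM inequality in the free parameters $p_1,p_2$.

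First, I would expand
\begin{align*}
A'\otimes B' &= \bigl(c_1 p_1 A_1 + c_2 p_2 A_2\bigr)\otimes\bigl(c_1 p_1^{-1} B_1 + c_2 p_2^{-1} B_2\bigr)\\
&= c_1^2 A_1\otimes B_1 + c_2^2 A_2\otimes B_2 + c_1 c_2\tfrac{p_1}{p_2} A_1\otimes B_2 + c_1 c_2\tfrac{p_2}{p_1} A_2\otimes B_1.
\end{align*}
Since $c_i\in\{\pm1\}$ we have $c_i^2=1$, and by independence $\mathbb{E}[c_1 c_2]=\mathbb{E}[c_1]\mathbb{E}[c_2]=0$. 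Taking expectations kills the cross terms and yields $\mathbb{E}[A'\otimes B']=A_1\otimes B_1+A_2\otimes B_2=C$, proving unbiasedness.

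For the variance, I interpret ``variance'' as the sum of per-entry variances, i.e.\ $\mathbb{E}[\|A'\otimes B'-C\|_{HS}^2]=\mathbb{E}[\|A'\otimes B'\|_{HS}^2]-\|C\|_{HS}^2$. Using the identity $\|X\otimes Y\|_{HS}^2=\|X\|_{HS}^2\|Y\|_{HS}^2$, I would expand
\[
\|A'\|_{HS}^2 = p_1^2\|A_1\|_{HS}^2+p_2^2\|A_2\|_{HS}^2+2c_1c_2 p_1 p_2\langle A_1,A_2\rangle,
\]
and similarly for $\|B'\|_{HS}^2$ with $p_i$ replaced by $p_i^{-1}$. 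When I multiply these two expressions, terms linear in $c_1c_2$ again have zero expectation, while the $(c_1c_2)^2=1$ term contributes a constant $4\langle A_1,A_2\rangle\langle B_1,B_2\rangle$ independent of $p_1,p_2$. The remaining expectation equals
\[
\|A_1\|_{HS}^2\|B_1\|_{HS}^2+\|A_2\|_{HS}^2\|B_2\|_{HS}^2+\tfrac{p_1^2}{p_2^2}\|A_1\|_{HS}^2\|B_2\|_{HS}^2+\tfrac{p_2^2}{p_1^2}\|A_2\|_{HS}^2\|B_1\|_{HS}^2
\]
plus constants.

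The only terms depending on $p_1,p_2$ are the last two. Setting $\alpha=p_1^2/p_2^2$, AM--GM gives that $\alpha\|A_1\|_{HS}^2\|B_2\|_{HS}^2+\alpha^{-1}\|A_2\|_{HS}^2\|B_1\|_{HS}^2$ is minimized at $\alpha=\|A_2\|_{HS}\|B_1\|_{HS}/(\|A_1\|_{HS}\|B_2\|_{HS})$. The proposed choice $p_i=\sqrt{\|B_i\|_{HS}/\|A_i\|_{HS}}$ yields exactly this ratio, so it is a minimizer. Notice that the objective depends only on the ratio $p_1/p_2$, so the optimum is a one--parameter family; the stated choice is a convenient canonical representative.

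The main obstacle is mostly bookkeeping: one has to be careful that the cross--term $2c_1c_2(\cdots)$ appearing in each of $\|A'\|_{HS}^2$ and $\|B'\|_{HS}^2$, when multiplied, produces a $(c_1c_2)^2$ contribution that survives the expectation. Once this is noted, the AM--GM step is immediate and the minimizer matches the claim.
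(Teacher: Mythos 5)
Your proof is correct, and its unbiasedness half coincides with the paper's argument (bilinearity of $\otimes$ plus $c_i^2=1$ and $\mathbb{E}[c_1c_2]=0$). Where you genuinely diverge is the variance-minimization half. The paper never computes the variance directly: it observes that $A'\otimes B'$ and the outer product $\vec{A}'\,\vec{B}'^{T}$ of the vectorized factors have the same entries, hence the same variance, and then invokes Proposition 1 of Ollivier et al.\ (the UORO ``rank-one trick'') as a black box to conclude that $p_i=\sqrt{\|B_i\|/\|A_i\|}$ is optimal; this reduction also lets the paper state and prove the lemma for $m$ summands at no extra cost, and it inherits the closed-form expression for the minimal variance from the cited result. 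Your route instead makes the lemma self-contained: using $\|X\otimes Y\|_{HS}=\|X\|_{HS}\|Y\|_{HS}$ you expand $\mathbb{E}\bigl[\|A'\|_{HS}^2\|B'\|_{HS}^2\bigr]$, note that the product of the two cross terms carries $(c_1c_2)^2\,p_1p_2\cdot(p_1p_2)^{-1}=1$ and is therefore a $p$-independent constant, and are left with $\tfrac{p_1^2}{p_2^2}\|A_1\|_{HS}^2\|B_2\|_{HS}^2+\tfrac{p_2^2}{p_1^2}\|A_2\|_{HS}^2\|B_1\|_{HS}^2$ as the only $p$-dependent part, which AM--GM minimizes exactly at the claimed ratio. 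This buys elementary transparency, makes explicit that only the ratio $p_1/p_2$ matters (so the minimizer is really a one-parameter family, something the paper's formulation glosses over), and, after subtracting $\|C\|_{HS}^2$, recovers the same closed-form minimum $2\|A_1\|\|A_2\|\|B_1\|\|B_2\|+2\langle A_1,A_2\rangle\langle B_1,B_2\rangle$ that the paper quotes; the price is that extending your computation to $m>2$ summands requires additional bookkeeping that the paper's reduction handles for free.
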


Lastly, we show by induction that random vectors $u_t$ and random matrices $A_t$ exist, such that $G'_t= u_t \otimes A_t$ satisfies $\mathbb{E}[G'_t] = G_t $. Assume that $G'_{t-1} = u_{t-1} \otimes A_{t-1}$ satisfies $\mathbb{E}[G'_{t-1}] = G_{t-1} $. Equation \ref{eq3}  and Lemma \ref{KF-lemma} imply that
\begin{align}
G_t &=  H_t \mathbb{E}\left[G'_{t-1}\right] + F_t  = H_t \mathbb{E}\left[ u _{t-1} \otimes A_{t-1} \right]  + \hat{h}_t \otimes D_t \ . 
\end{align}
Next, by linearity of expectation and since the first dimension of $u_{t-1}$ is $1$, it follows
\begin{align}
G_t &=  \mathbb{E}\left[H_t  (u _{t-1} \otimes A_{t-1})   + \hat{h}_t \otimes D_t \right] = \mathbb{E}\left[  u _{t-1} \otimes (H_tA_{t-1})   + \hat{h}_t \otimes D_t \right]\ . 
\end{align}
Finally,   we obtain by  Lemma \ref{2to1-lemma} for any $p_1, p_2 >0$
\begin{align}
G_t &=   \mathbb{E}\left[ (c_{1} p_{1} u_{t-1} + c_{2} p_{1} \hat{h}_{t})\otimes (c_{1} \frac{1}{p_{1}}  \left( H_t A_{t-1} \right) + c_{2} \frac{1}{p_{2}} D_{t})\right]\ ,
\end{align}
where the expectation is taken over the probability distribution of $u_{t-1}$, $A_{t-1}$, $c_1$ and $c_2$.

With these observations at hand, we are ready to present the KF-RTRL algorithm. At any time-step $t$ we receive the estimates $u_{t-1}$ and $A_{t-1}$ from the previous time-step. First, compute $h_t$, $D_t$ and $H_t$. Then, choose $c_1$ and $c_2$ uniformly at random from $\{-1,+1\}$ and compute 
\begin{align} 
u_t &= c_{1} p_{1} u_{t-1} + c_{2} p_{2} \hat{h}_{t} \label{eq:update_u}\\ 
A_t &= c_{1} \frac{1}{p_{1}}  \left( H_t A_{t-1} \right) + c_{2} \frac{1}{p_{2}} D_{t} \label{eq:update_A}   \ ,
%G'_t &= u_t \otimes A_t \label{eq:update_G}
\end{align}
where $p_1= \sqrt{\|H_tA_{t-1}\|/\|u_{t-1}\|}$ and $p_2=\sqrt{\|D_t\|/\|\hat{h}_{t}\|}$. Lastly, our algorithm computes $\frac{d L_t}{d h_t} \cdot G'_t$, which is used for optimizing the parameters. For a detailed pseudo-code of the KF-RTRL algorithm see Algorithm \ref{alg:uoro}. In order to see that $\frac{d L_t}{d h_t} \cdot G'_t$ is an unbiased estimate of $\frac{d L_t}{d \theta}$, we apply once more linearity of expectation:  $\mathbb{E}\left[\frac{d L_t}{d h_t} \cdot G'_t\right]=\frac{d L_t}{d h_t} \cdot \mathbb{E}\left[G'_t\right]=\frac{d L_t}{d h_t} \cdot G_t= \frac{d L_t}{d \theta}$.

  One KF-RTRL update has run-time $O(n^3)$  and requires $O(n^2)$ memory. In order to see the statement for the memory requirement, note that all involved matrices and vectors have $O(n^2)$ elements, except $G'_t$. However, we do not need to explicitly compute $G'_t$  in order to obtain $\frac{d L_t}{d \theta}$, because $\frac{d L_t}{d h_t} \cdot G'_t = \frac{d L_t}{d h_t} \cdot u_t\otimes A_t = u_t \otimes (\frac{d L_t}{d h_t} A_t) $ can be evaluated in this order. In order to see the statement for the run-time, note that $H_t$ and $A_{t-1}$ have both size $O(n) \times O(n)$. Therefore, computing $H_tA_{t-1}$ requires $O(n^3)$ operations. All other arithmetic operations trivially require run-time $O(n^2)$.

The proofs of Lemmas \ref{KF-lemma} and \ref{2to1-lemma} and of the second statement of Theorem \ref{thm:KF_RTRL_properties} are given in the appendix. 

\subsubsection*{Comparison of the KF-RTRL with the UORO algorithm}

Since the variance of the gradient estimate is directly linked to convergence speed and performance, let us first compare the variance of the two algorithms. Theorem \ref{thm:KF_RTRL_properties} states that the mean of the variances of the entries of $G_t'$ are of order $O(n^{-1})$. In the appendix, we show a slightly stronger statement, that is, if $\|F_t\| \leq C $ for all $t$, then the mean of the variances of the entries is of order $O(\frac{C^2}{n^3})$, where $n^3$ is the number of elements of $G_t$. The bound $O(n^{-1})$ is obtained by a trivial bound on the size of the entries of $h_t$ and $D_t$ and using $\|h_t\|\|D_t\|=\|F_t\|$. In the appendix, we show further that already the first step of the UORO approximation, in which $F_t$ is approximated by a rank-one matrix, introduces noise of order $(n-1)\|F_t\|$. Assuming that all further approximations would not add any noise, then the same trivial bounds on $\|F_t\|$ yield a mean variance of $O(1)$. We conclude that the variance of KF-RTRL is asymptotically by (at least) a factor $n$ smaller than the variance of UORO.

Next, let us compare the generality of the algorithm when applying it to different network architectures. The KF-RTRL algorithms requires that in one time-step each parameter only affects one element of the next hidden state (see Lemma \ref{KF-lemma}). Although many widely used RNN architectures satisfy this requirement, seen from this angle, the UORO algorithm is favorable as it is applicable to arbitrary RNN architectures.

Finally, let us compare memory requirements and runtime of KF-RTRL and UORO. In terms of memory requirements, both algorithms require $O(n^2)$ storage and perform equally good. In terms of run-time, KF-RTRL requires $O(n^3)$, while UORO requires $O(n^2)$ operations. However, the faster run-time of UORO comes at the cost of worse variance and therefore worse performance. In order to reduce the variance of UORO by a factor $n$, one would need $n$ independent samples of $G'_t$. This could be achieved by reducing the learning rate by a factor of $n$, which would then require $n$ times more data, or by sampling $G'_t$ $n$ times in parallel, which would require $n$ times more memory. Still, our empirical investigation shows, that  KF-RTRL outperforms UORO, even when taking $n$ UORO samples of $G_t$ to reduce the variance (see Figure \ref{fig:empirical}). Moreover, for sufficiently large networks the scaling of the KF-RTRL run-time improves by using fast matrix multiplication algorithms.

\section{Experiments} \label{sec:experiments}

In this section, we quantify the effect on learning that the reduced variance of KF-RTRL compared to the one of UORO has. First, we evaluate the ability  of learning long-term dependencies on a deterministic binary string memorization task. Since most real world problems are more complex and of stochastic nature, we secondly evaluate the performance of the learning algorithms on a character level language modeling task, which is a more realistic benchmark. For these two tasks, we also compare to learning with Truncated BPTT and measure the performance of the considered algorithms based on 'data time', i.e. the amount of data  observed by the algorithm.
Finally, we investigate the variance of KF-RTRL and UORO by comparing to their exact counterpart, RTRL. 
For all experiments, we use a single-layer Recurrent Highway Network \cite{zilly2016recurrent} \footnote{For implementation simplicity, we use $2 * sigmoid(x) -1$ instead of $Tanh(x)$ as  non-linearity function. Both functions have very similar properties, and therefore, we do not believe this has any significant effect.}. 
%}.

\subsection{Copy Task} \label{sec:copy-task}

In the copy task experiment, a binary string is presented sequentially to an RNN. Once the full string has been presented, it should reconstruct the original string without any further information. Figure \ref{fig:copy-io} shows several input-output pairs. We refer to the length of the string as $T$.
Figure \ref{fig:copy-plot} summarizes the results. The smaller variance of KF-RTRL greatly helps learning faster and capturing longer dependencies. KF-RTRL and UORO manage to solve the task on average up to $T=36$  and $T=13$, respectively.  As expected, TBPTT  cannot learn dependencies that are longer than the truncation horizon.

\begin{figure}
	%\hspace{-3em}
	\centering
    %\vspace{-2em}
    \addtocounter{figure}{1}
	\begin{minipage}{.4\textwidth}
		\centering
		\includegraphics[width=.99\linewidth]{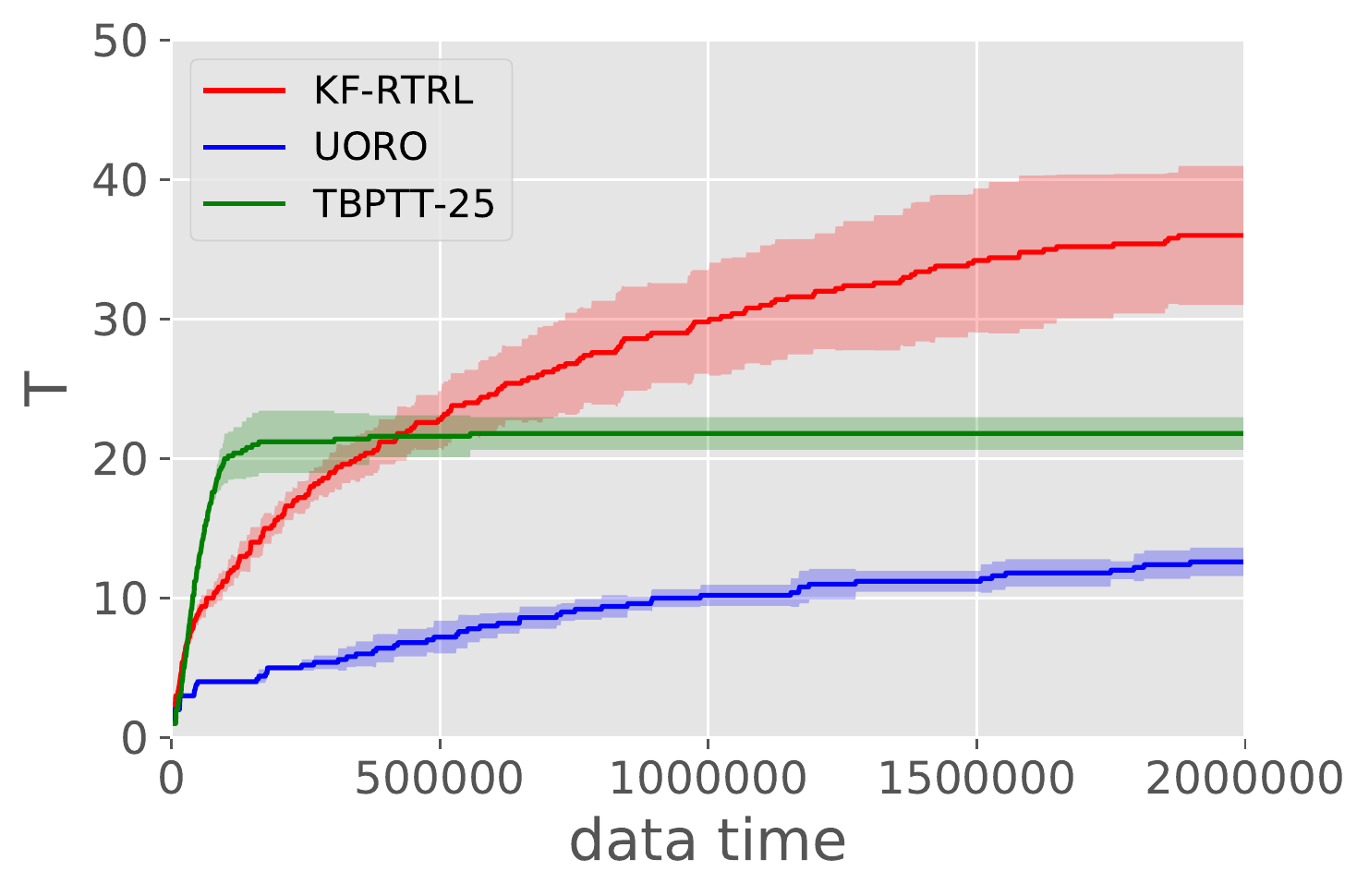}
		\captionof{subfigure}{       }
		\label{fig:copy-plot}
	\end{minipage}%
	\hspace{.15\textwidth}
	\begin{minipage}{.3\textwidth}
		\centering
		\textbf{Input:} \texttt{\#01101------------} \\
		\textbf{Output:} \texttt{------------\#01101 } \\ 
		\vspace{1em}
		\textbf{Input:} \texttt{\#11010------------} \\
		\textbf{Output:} \texttt{------------\#11010 } \\
		\vspace{1em}
		\textbf{Input:} \texttt{\#00100------------} \\
		\textbf{Output:} \texttt{------------\#00100 } \\
		\vspace{2em}
		
		\captionof{subfigure}{}
		\label{fig:copy-io}
	\end{minipage}
    \addtocounter{figure}{-1}
    \captionof{figure}{Copy Task: Figure \ref{fig:copy-plot} shows the learning curves of UORO, KF-RTRL and TBPTT with truncation horizon of $25$ steps. We plot the mean and standard deviation (shaded area) over $5$ trials. Figure \ref{fig:copy-io} shows three input and output examples with $T=5$. }
    %\vspace{-2em}
\end{figure}

In this experiment, we start with $T=1$ and when the RNN error drops below $0.15$ bits/char, we increase $T$ by one. After each sequence, the hidden state is reset to all zeros. To improve performance, the length of each sample is picked uniformly at random from $T$ to $T-5$. This forces the network to learn a general algorithm for the task, rather than just learning to solve sequences of length $T$.
We use a RHN  with $256$ units and a batch size of $256$. We optimize the log-likelihood using the Adam optimizer \cite{kingma2014adam} with  default Tensorflow \cite{abadi2016tensorflow}  parameters$, \beta_1 = 0.9$ and $\beta_2 = 0.999$. For each model we pick the optimal learning rate from \{$10^{-2.5}$, $10^{-3}$, $10^{-3.5}$, $10^{-4}$\}. We repeat each experiment $5$ times.

\subsection{Character level language modeling on the Penn Treebank dataset}\label{sec:penn-treebank}

A standard test for RNNs is character level language modeling. The network receives a text sequentially, character by character, and at each time-step it must predict the next character. This task is  very challenging, as it requires both long and short term dependencies. Additionally, it is highly stochastic, i.e. for the same input sequence there are many possible continuations, but only one is observed at each training step. Figure \ref{fig:ptb-plot} and Table \ref{fig:ptb-table} summarize the results. 
Truncated BPTT outperforms both online learning algorithms, but KF-RTRL almost reaches its performance and considerably outperforms UORO. For this task, the noise introduced in the approximation is more harmful than the truncation bias. This is probably the case because the short term dependencies dominate the loss, as indicated by the small difference between TBPTT with truncation horizon $5$  and $25$. 

For this experiment we use the Penn TreeBank \cite{marcus1993building} dataset, which is a collection of Wall Street Journal articles. The text is lower cased and the vocabulary is restricted to $10$K words. Out of vocabulary words are replaced by "<unk>" and numbers by "N". We split the data following \citet{mikolov2012subword}.
The experimental setup is the same as in the Copy task, and we pick the optimal learning rate from the same range. Apart from that, we reset the hidden state to the all zeros state with probability $0.01$ at each time step. This technique was introduced by \citet{melis2017state} to improve the performance on the validation set, as the initial state for the validation is the all zeros state. Additionally, this helps the online learning algorithms, as it resets the gradient approximation, getting rid of stale gradients. Similar techniques have been shown \cite{catfolis1993method} to  also improve RTRL. 

\begin{figure}
	%\centering
    %\hspace*{-.05\textwidth}
	\begin{minipage}{.4\textwidth}
		\centering
		\includegraphics[width=.99\linewidth]{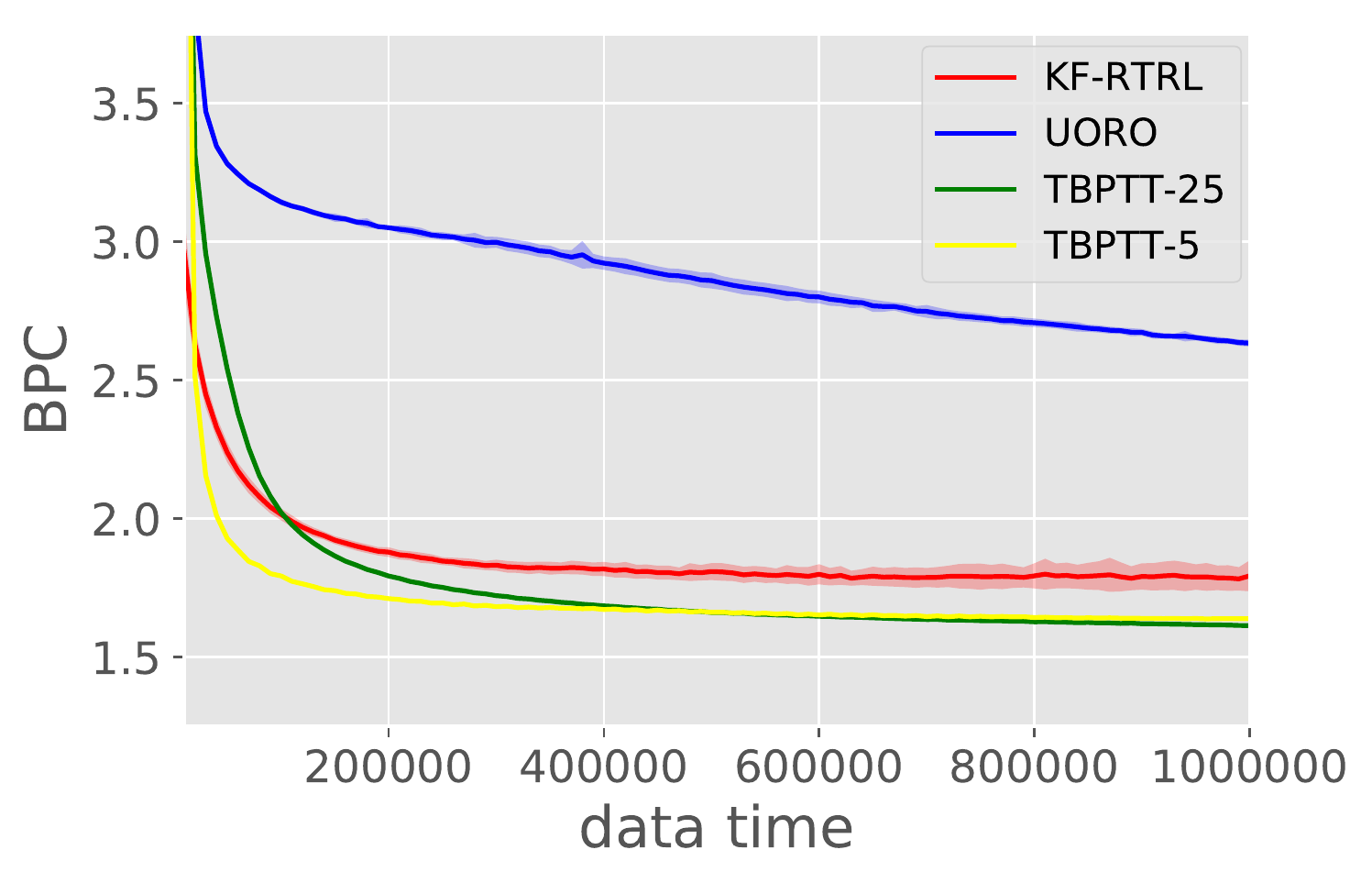}
		\captionof{figure}{Validation performance on Penn TreeBank in bits per character (BPC). The small variance of the KF-RTRL approximation considerably improves the performance compared to UORO.}
		\label{fig:ptb-plot}
	\end{minipage}%
	\hspace{.1\textwidth}
	\begin{minipage}{.5\textwidth}
    	\captionof{table}{Results on Penn TreeBank. \citet{merity2018analysis} is currently the state of the art (trained with TBPTT). For simplicity we do not report standard deviations, as all of them are smaller than $0.03$.}
        \label{fig:ptb-table}
        
        \setlength\tabcolsep{3pt}
		\begin{tabular}{ccccc}
          \toprule
          Name     & Validation     & Test & & \#params\\
          \midrule

		KF-RTRL     & 1.77 & 1.72  & & $133$K    \\
          UORO     & 2.63 & 2.61  & & $133$K    \\
          \bottomrule
          TBPTT-5  & 1.64 & 1.58 & & $133$K \\
          TBPTT-25 & 1.61 & 1.56 & & $133$K \\
          \bottomrule
          \citet{merity2018analysis} & - & 1.18 & & 13.8M \\
          \bottomrule
		\end{tabular}

	\end{minipage}
\end{figure}

\subsection{Variance Analysis}\label{sec:empirical-variance}
With our final set of experiments, we empirically measure how the noise evolves over time and how it is affected by the number of units $n$. Here, we also compare to UORO-AVG that computes $n$ independent samples of UORO and averages them to reduce the variance.
The computation costs of UORO-AVG are on par with those of KF-RTRL, $O(n^3)$, however the   memory costs of $O(n^3)$ are higher than the ones of KF-RTRL of $O(n^2)$.
 For each experiment, we compute the angle $\phi$ between the gradient estimate and the exact gradient of the loss with respect to the parameters.  Intuitively, $\phi$ measures how aligned the gradients are, even if the magnitude is different. 
Figure \ref{fig:over-time} shows that $\phi$ is stable over time and the noise does not accumulate for any of the three algorithms. 
Figure \ref{fig:over-units} shows that KF-RTRL and UORO-AVG have similar performance as the number of units increases. This observation is in line with the theoretical prediction in Section \ref{UORO-variance} that the variance of UORO is by a factor $n$ larger than the KF-RTRL variance (averaging $n$ samples as done in AVG-UORO reduces the variance by a factor $n$).
%Figure \ref{fig:over-units} shows that KF-RTRL scales better with the number of units than both UORO-AVG and UORO.

In the first experiment, we run several untrained RHNs with $256$ units over the first $10000$ characters of Penn TreeBank.
In the second experiment, we compute $\phi$ after running RHNs with different number of units for $100$ steps on Penn TreeBank. We perform $100$ repetitions per experiment and plot the mean and standard deviation.

\begin{figure}
\centering     %%% not \center
\subfigure[]{\label{fig:over-time}\includegraphics[width=.35\textwidth]{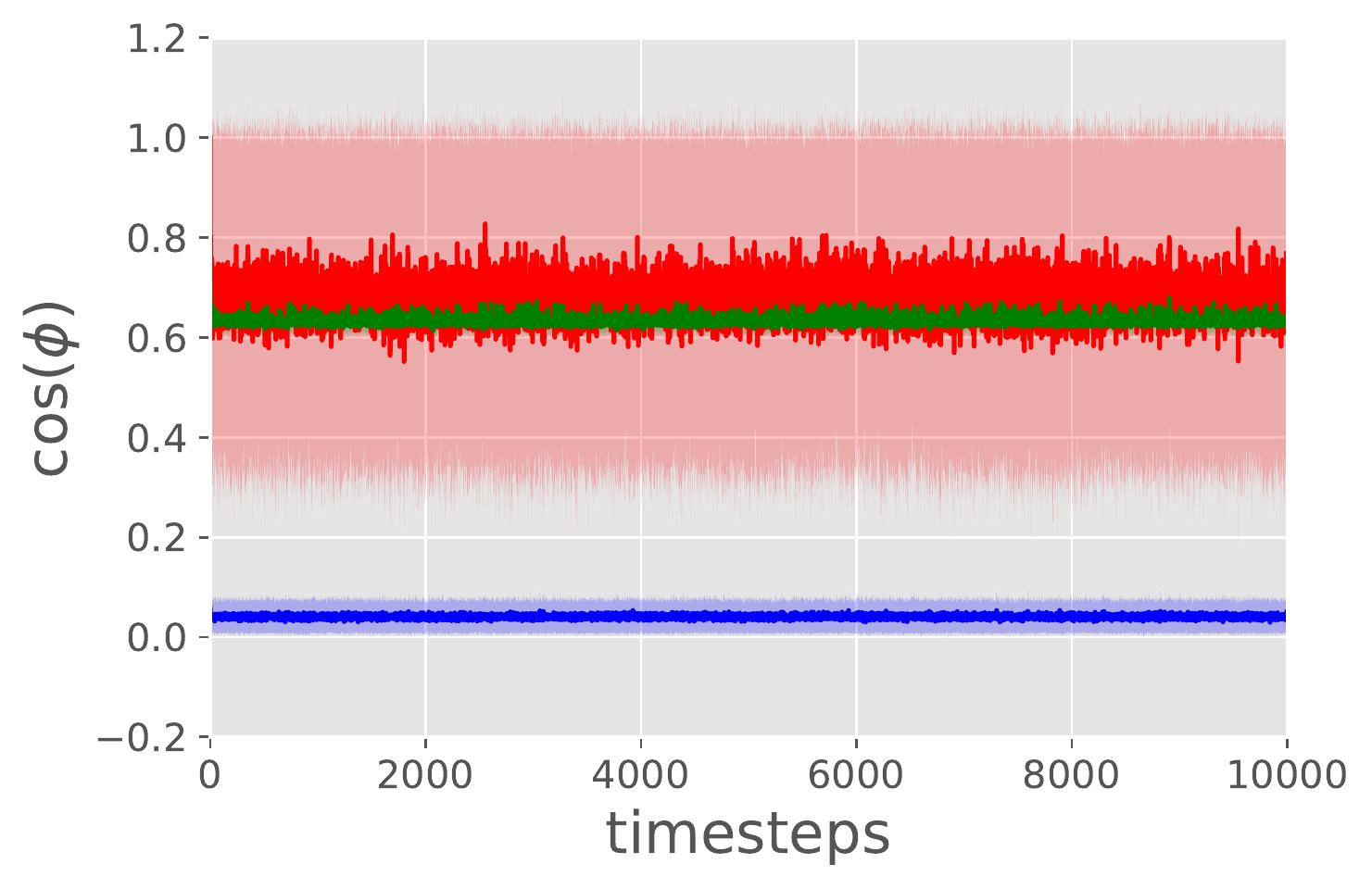}}
\subfigure{ \raisebox{.75\height}{\includegraphics[width=.15\textwidth]{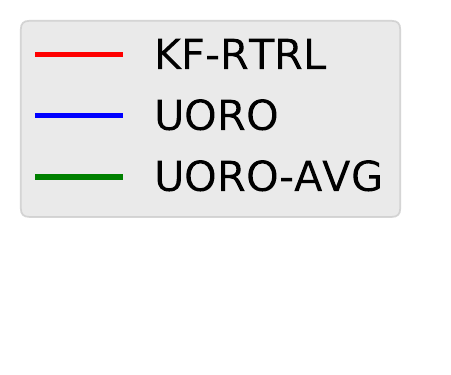}} }
%\vspace{5em}
\addtocounter{subfigure}{-1}
\subfigure[]{\label{fig:over-units}\includegraphics[width=.35\textwidth]{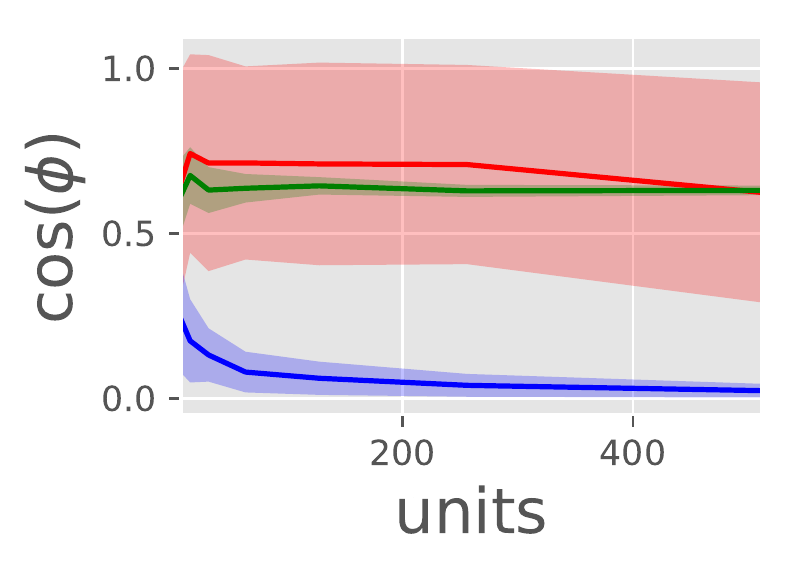}}
\caption{Variance analysis: We compare the cosine of the angle between the approximated and the true   value of $\frac{d L}{d \theta}$. A cosine of $1$ implies that the approximation and the true  value are exactly aligned, while a random vector gets a cosine of $0$ in expectation. Figure \ref{fig:over-time} shows that the variance is stable over time for the three algorithms. Figure \ref{fig:over-units} shows that the variance of KF-RTRL and UORO-AVG are almost unaffected by the number of units, while UORO degrades more quickly as  the network size increases.
\label{fig:empirical}}
\end{figure}

\section{Conclusion}\label{sec:conclusion}

In this paper, we have presented the KF-RTRL online learning algorithm. We have proven that it approximates RTRL in an unbiased way, and that under reasonable assumptions the noise is stable over time and much smaller than the one of UORO, the only other previously known  unbiased RTRL approximation algorithm. Additionally, we have empirically verified that the reduced variance of our algorithm greatly improves learning for the two tested tasks. In the first task, an RHN trained with KF-RTRL effectively captures long-term dependencies (it learns to memorize binary strings of length up to $36$). In the second task, it almost matches the performance of TBPTT in a standard RNN benchmark, character level language modeling on Penn TreeBank.

More importantly, our work  opens up interesting directions for future work, as even minor reductions of the noise could make the approach a viable alternative to TBPTT, especially for tasks with inherent long-term dependencies. For example constraining the weights, constraining the  activations or using some form of regularization could reduce the noise. Further, it may be possible to design architectures that make the approximation less noisy. 
Moreover, one might attempt to improve the run-time of KF-RTRL by using approximate matrix multiplication algorithms or inducing properties on the $H_t$ matrix that allow for fast matrix multiplications, like sparsity or low-rank.

This work advances the understanding of how unbiased gradients can be computed, which is of central importance as unbiasedness is essential for theoretical convergence guarantees. Since RTRL based approaches satisfy this key assumption, it is of  interest to further progress them.

\bibliographystyle{abbrvnat}
\bibliography{ref}

\begin{thebibliography}{20}
\providecommand{\natexlab}[1]{#1}
\providecommand{\url}[1]{\texttt{#1}}
\expandafter\ifx\csname urlstyle\endcsname\relax
  \providecommand{\doi}[1]{doi: #1}\else
  \providecommand{\doi}{doi: \begingroup \urlstyle{rm}\Url}\fi

\bibitem[Abadi et~al.(2016)Abadi, Barham, Chen, Chen, Davis, Dean, Devin,
  Ghemawat, Irving, Isard, et~al.]{abadi2016tensorflow}
M.~Abadi, P.~Barham, J.~Chen, Z.~Chen, A.~Davis, J.~Dean, M.~Devin,
  S.~Ghemawat, G.~Irving, M.~Isard, et~al.
\newblock Tensorflow: A system for large-scale machine learning.
\newblock In \emph{OSDI}, volume~16, pages 265--283, 2016.

\bibitem[Bengio et~al.(1994)Bengio, Simard, and Frasconi]{bengio1994learning}
Y.~Bengio, P.~Simard, and P.~Frasconi.
\newblock Learning long-term dependencies with gradient descent is difficult.
\newblock \emph{IEEE transactions on neural networks}, 5\penalty0 (2):\penalty0
  157--166, 1994.

\bibitem[Catfolis(1993)]{catfolis1993method}
T.~Catfolis.
\newblock A method for improving the real-time recurrent learning algorithm.
\newblock \emph{Neural Networks}, 6\penalty0 (6):\penalty0 807--821, 1993.

\bibitem[Hochreiter and Schmidhuber(1997)]{hochreiter1997long}
S.~Hochreiter and J.~Schmidhuber.
\newblock Long short-term memory.
\newblock \emph{Neural computation}, 9\penalty0 (8):\penalty0 1735--1780, 1997.

\bibitem[Jaderberg et~al.(2016)Jaderberg, Czarnecki, Osindero, Vinyals, Graves,
  Silver, and Kavukcuoglu]{jaderberg2016decoupled}
M.~Jaderberg, W.~M. Czarnecki, S.~Osindero, O.~Vinyals, A.~Graves, D.~Silver,
  and K.~Kavukcuoglu.
\newblock Decoupled neural interfaces using synthetic gradients.
\newblock \emph{arXiv preprint arXiv:1608.05343}, 2016.

\bibitem[Jaeger(2001)]{jaeger2001echo}
H.~Jaeger.
\newblock The “echo state” approach to analysing and training recurrent
  neural networks-with an erratum note.
\newblock \emph{Bonn, Germany: German National Research Center for Information
  Technology GMD Technical Report}, 148\penalty0 (34):\penalty0 13, 2001.

\bibitem[Kingma and Ba(2014)]{kingma2014adam}
D.~P. Kingma and J.~Ba.
\newblock Adam: A method for stochastic optimization.
\newblock \emph{arXiv preprint arXiv:1412.6980}, 2014.

\bibitem[Luko{\v{s}}evi{\v{c}}ius and
  Jaeger(2009)]{lukovsevivcius2009reservoir}
M.~Luko{\v{s}}evi{\v{c}}ius and H.~Jaeger.
\newblock Reservoir computing approaches to recurrent neural network training.
\newblock \emph{Computer Science Review}, 3\penalty0 (3):\penalty0 127--149,
  2009.

\bibitem[Maass et~al.(2002)Maass, Natschl{\"a}ger, and Markram]{maass2002real}
W.~Maass, T.~Natschl{\"a}ger, and H.~Markram.
\newblock Real-time computing without stable states: A new framework for neural
  computation based on perturbations.
\newblock \emph{Neural computation}, 14\penalty0 (11):\penalty0 2531--2560,
  2002.

\bibitem[Marcus et~al.(1993)Marcus, Marcinkiewicz, and
  Santorini]{marcus1993building}
M.~P. Marcus, M.~A. Marcinkiewicz, and B.~Santorini.
\newblock Building a large annotated corpus of english: The penn treebank.
\newblock \emph{Computational linguistics}, 19\penalty0 (2):\penalty0 313--330,
  1993.

\bibitem[Melis et~al.(2017)Melis, Dyer, and Blunsom]{melis2017state}
G.~Melis, C.~Dyer, and P.~Blunsom.
\newblock On the state of the art of evaluation in neural language models.
\newblock \emph{arXiv preprint arXiv:1707.05589}, 2017.

\bibitem[Merity et~al.(2018)Merity, Keskar, and Socher]{merity2018analysis}
S.~Merity, N.~S. Keskar, and R.~Socher.
\newblock An analysis of neural language modeling at multiple scales.
\newblock \emph{arXiv preprint arXiv:1803.08240}, 2018.

\bibitem[Mikolov et~al.(2012)Mikolov, Sutskever, Deoras, Le, Kombrink, and
  Cernocky]{mikolov2012subword}
T.~Mikolov, I.~Sutskever, A.~Deoras, H.-S. Le, S.~Kombrink, and J.~Cernocky.
\newblock Subword language modeling with neural networks.
\newblock \emph{preprint (http://www. fit. vutbr. cz/imikolov/rnnlm/char.
  pdf)}, 2012.

\bibitem[Ollivier et~al.(2015)Ollivier, Tallec, and
  Charpiat]{ollivier2015training}
Y.~Ollivier, C.~Tallec, and G.~Charpiat.
\newblock Training recurrent networks online without backtracking.
\newblock \emph{arXiv preprint arXiv:1507.07680}, 2015.

\bibitem[Rumelhart et~al.(1986)Rumelhart, Hinton, and
  Williams]{rumelhart1986learning}
D.~E. Rumelhart, G.~E. Hinton, and R.~J. Williams.
\newblock Learning representations by back-propagating errors.
\newblock \emph{nature}, 323\penalty0 (6088):\penalty0 533, 1986.

\bibitem[Tallec and Ollivier(2017{\natexlab{a}})]{tallec2017unbiased}
C.~Tallec and Y.~Ollivier.
\newblock Unbiased online recurrent optimization.
\newblock \emph{arXiv preprint arXiv:1702.05043}, 2017{\natexlab{a}}.

\bibitem[Tallec and Ollivier(2017{\natexlab{b}})]{tallec2017unbiasing}
C.~Tallec and Y.~Ollivier.
\newblock Unbiasing truncated backpropagation through time.
\newblock \emph{arXiv preprint arXiv:1705.08209}, 2017{\natexlab{b}}.

\bibitem[Williams and Peng(1990)]{Williams90anefficient}
R.~J. Williams and J.~Peng.
\newblock An efficient gradient-based algorithm for on-line training of
  recurrent network trajectories.
\newblock \emph{Neural Computation}, 2:\penalty0 490--501, 1990.

\bibitem[Williams and Zipser(1989)]{williams1989learning}
R.~J. Williams and D.~Zipser.
\newblock A learning algorithm for continually running fully recurrent neural
  networks.
\newblock \emph{Neural computation}, 1\penalty0 (2):\penalty0 270--280, 1989.

\bibitem[Zilly et~al.(2016)Zilly, Srivastava, Koutn{\'\i}k, and
  Schmidhuber]{zilly2016recurrent}
J.~G. Zilly, R.~K. Srivastava, J.~Koutn{\'\i}k, and J.~Schmidhuber.
\newblock Recurrent highway networks.
\newblock \emph{arXiv preprint arXiv:1607.03474}, 2016.

\end{thebibliography}

\newpage

\appendix
\section{Appendix}
In this appendix, we prove all the lemmas and theorems whose proofs has been omitted in the main paper. For the ease of readability we restate the statement object for proving in the beginning of each section.

\subsection{Basic Notation}
The Hilbert-Schmid norm  of a matrix $A$ is defined as $ \| A \|_{HS} \coloneqq \sum _{i,j } a_{ij}^2$ and the Hilbert-Schmid inner product of two matrices $A,B$ of the same size is defined as $\langle A, B \rangle_{HS} = \sum_{ij}a_{ij}b_{ij}$. When regarding an $n\times m$ matrix as a point in $\mathbb{R}^{mn}$, then the standard euclidian norm of this point is the same as the Hilbert-Schmid norm of the matrix. Therefore, for notational simplicity, we omit the $HS$ subscript and write $\|A \|$ and $\langle A, B\rangle$. Note that the Hilbert-Schmid norm satisfies  $\| A \otimes B\| = \|A\| \|B\|$. 
Further, we measure the variance of a random matrix $A$ by the sum of the variances of its entries:
\begin{align}
\textrm{Var}[A]= \sum _{ij} \textrm{Var}[a_{ij}] = \sum_{ij} \mathbb{E}[a_{ij}^2] - \mathbb{E}[a_{ij}]^2 = \mathbb{E}[\|A \|^2]- \| \mathbb{E}[A]\|^2
\end{align}

\subsection{Proof of Lemma \ref{KF-lemma}}

\begin{lemma-appendix}
	Assume the learnable parameters $\theta$ are a set of matrices $W^1, \ldots, W^{r}$, let $\hat{h}_{t-1}$ be the hidden state $h_{t-1}$ concatenated with the input  $x_t$ and let $z^k = \hat{h}_{t-1} W^k$ for $k=1, \ldots, r$. 
    Assume that $h_t$ is obtained by point-wise operations over the $z^k$'s, that is, $(h_{t})_j = f(z^1_j, \ldots , z^{r}_j)$, where $f$ is such that $\frac{\partial f}{\partial z^k_j}$ is bounded by a constant. 
    Let $D^k \in \mathbb{R}^{n\times n}$ be the diagonal matrix defined by $D_{jj}^k = \frac{\partial (h_t)_j}{\partial z^k_j}$,  and let $D = \left( D^1 | \ldots | D^{r} \right)$. Then, it holds $\frac{\partial h_t}{\partial \theta} = \hat{h}_{t-1} \otimes D$.
    %\label{KF-lemma}
\end{lemma-appendix}

\begin{proof}

Note that $z_a^b= \sum_i w^b_{ia}(\hat{h}_{t-1})_i $ only depends on $w_{ij}^k$ if $j=a$ and $k=b$, that $\frac{\partial z_j^k}{\partial w_{ij}^k}= (\hat{h}_{t-1})_i$, and that   $\frac{\partial (h_t)_{\ell}}{\partial z^i_j}= 0 $ if $j\neq \ell$. Therefore

\begin{equation}
\frac{\partial (h_t)_{\ell}}{\partial w_{ij}^k }
=\sum_{a,b}  \frac{\partial (h_t)_{\ell}}{\partial z_{a}^b }\frac{\partial z_{a}^b }{\partial w_{ij}^k }
=\frac{\partial (h_t)_{\ell}}{	\partial z_{\ell}^k}\frac{\partial z_{\ell}^k}{\partial w_{ij}^k}= \frac{\partial (h_t)_{\ell}}{	\partial z_{\ell}^k}  \cdot \delta _{\ell, j} (\hat{h}_{t-1})_i \ , 
	\end{equation}
    where $\delta_ {\ell, j}$ is the Kronecker delta, which is $1$ if $\ell=j$ and $0$ if $\ell \neq j$.
If we assume that the parameters $w_{ij}^k$ are ordered lexicographically in $i$, $k$, $j$, then $D_{\ell, j}^k= \delta _{\ell, j} \frac{\partial (h_t)_{j}}{	\partial z_{j}^k}$.
\end{proof}

  \subsection{Proof of Lemma \ref{2to1-lemma}}
As mentioned in the paper Lemma \ref{2to1-lemma} is essentially borrowed from \cite{ollivier2015training}. 
We state the lemma slightly more general as in the paper, that is, for arbitrary many summands.

\begin{lemma} \label{lem:one_rank_approx}
Let $C= \sum_{i=1}^m A_i \otimes B_i$, where the $A_i$'s are of the same size and the $B_i's$ are of the same size. Let the $c_1, \ldots , c_m$ be chosen independently and uniformly at random from $\{-1, +1\}$ and let $p_1, \ldots, p_m >0$ be positive reals. Define $A'= \sum_{i=1}^m c_ip_iA_i$ and $B'= \sum_{i=1}^m c_i\tfrac{1}{p_i}B_i$. Then,  $C'=A' \otimes B'$ is an unbiased approximation of $C$, that is $\mathbb{E}\left[C' \right] = C $.  The free parameters $p_i$ can be chosen to minimize the variance of $A'$.  For the optimal choice $p_i= \sqrt{\|B_i\| / \|v_i\|}$ it holds
\begin{equation}
\textrm{Var}[C']= \sum_i \sum_{i \neq j} \|A_i\| \|A_j\| \|B_i\| \|B_j\| + \langle A_i, A_j \rangle \langle B_i, B_j \rangle \ .
\end{equation}
\end{lemma}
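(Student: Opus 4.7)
My plan is to handle the three claims of the lemma (unbiasedness, the variance formula, and optimality of the $p_i$) separately, each by a direct calculation based on the independence of the Rademacher variables $c_i$.

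First, for unbiasedness, I would expand the Kronecker product bilinearly:
\[
C' = A'\otimes B' = \sum_{i,j} c_i c_j \tfrac{p_i}{p_j}\, A_i\otimes B_j .
\]
Taking expectations and using $\mathbb{E}[c_i c_j]=\delta_{ij}$ (because the $c_i$ are independent Rademachers) collapses the double sum to $\sum_i A_i\otimes B_i = C$, proving $\mathbb{E}[C']=C$.

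For the variance, the starting point is $\mathrm{Var}[C']=\mathbb{E}[\|C'\|^2]-\|C\|^2$, using the Hilbert--Schmidt convention already introduced in the appendix. Since $\|A\otimes B\|=\|A\|\,\|B\|$, I have $\|C'\|^2=\|A'\|^2\|B'\|^2$, which expanded gives
\[
\|C'\|^2=\sum_{i,j,k,\ell} c_i c_j c_k c_\ell\, \tfrac{p_i p_j}{p_k p_\ell}\, \langle A_i,A_j\rangle\,\langle B_k,B_\ell\rangle .
\]
The key combinatorial step is that $\mathbb{E}[c_i c_j c_k c_\ell]$ equals $1$ exactly when the multiset $\{i,j,k,\ell\}$ admits a perfect pairing of equal indices and $0$ otherwise. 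I would partition the surviving tuples into the case $i=j=k=\ell$ (contributing $\sum_i \|A_i\|^2\|B_i\|^2$) and the three pairing patterns with two distinct values: $\{i=j,\ k=\ell,\ i\neq k\}$ giving $\sum_{i\neq k}\tfrac{p_i^2}{p_k^2}\|A_i\|^2\|B_k\|^2$, and the two cross-pairings $\{i=k,j=\ell\}$ and $\{i=\ell,j=k\}$ each giving $\sum_{i\neq j}\langle A_i,A_j\rangle\langle B_i,B_j\rangle$. A parallel expansion of $\|C\|^2=\sum_{i,j}\langle A_i,A_j\rangle\langle B_i,B_j\rangle$ cancels the diagonal contribution and one copy of the off-diagonal inner-product sum, leaving
\[
\mathrm{Var}[C']=\sum_{i\neq k}\tfrac{p_i^2}{p_k^2}\|A_i\|^2\|B_k\|^2 + \sum_{i\neq j}\langle A_i,A_j\rangle\langle B_i,B_j\rangle .
\]
The main obstacle is keeping the bookkeeping of the three pairings clean so that the all-equal tuples are not counted multiple times; partitioning by number of distinct values up front (rather than inclusion--exclusion over the three pairings) is the cleanest route.

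Finally, for optimality of $p_i=\sqrt{\|B_i\|/\|A_i\|}$, only the first sum in the variance depends on the $p_i$. I would group each unordered pair $\{i,k\}$ with $i\neq k$ and apply AM--GM to the two terms
\[
\tfrac{p_i^2}{p_k^2}\|A_i\|^2\|B_k\|^2 + \tfrac{p_k^2}{p_i^2}\|A_k\|^2\|B_i\|^2 \ \geq\ 2\|A_i\|\|A_k\|\|B_i\|\|B_k\|,
\]
with equality iff $p_i^2\|A_i\|/\|B_i\|=p_k^2\|A_k\|/\|B_k\|$. The proposed choice makes both sides equal to $1$, hence satisfies the equality condition simultaneously for every pair, so it minimises each pairwise term and therefore the whole sum. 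Plugging this back into the variance yields the claimed closed form $\sum_{i\neq j}\|A_i\|\|A_j\|\|B_i\|\|B_j\|+\sum_{i\neq j}\langle A_i,A_j\rangle\langle B_i,B_j\rangle$, finishing the proof.
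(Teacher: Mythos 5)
Your proof is correct, but it reaches the variance formula and the optimality of the $p_i$ by a genuinely different route than the paper. The unbiasedness part coincides: bilinear expansion of $A'\otimes B'$ plus $\mathbb{E}[c_ic_j]=\delta_{ij}$. For the remaining claims, however, the paper computes nothing directly: it vectorizes, observing that $C'$ has the same entries (merely rearranged) as the rank-one outer product $D'=\bigl(\sum_i c_ip_i\vec{A}_i\bigr)\bigl(\sum_j c_j\tfrac{1}{p_j}\vec{B}_j\bigr)^T$, so that $\textrm{Var}[C']=\textrm{Var}[D']$, and then cites Proposition 1 of \cite{ollivier2015training} (the UORO rank-one trick) for both the optimal $p_i$ and the closed-form variance, translating back via $\|A\|=\|\vec{A}\|$ and $\langle A,B\rangle=\langle\vec{A},\vec{B}\rangle$. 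You instead prove everything from scratch: the fourth-moment Rademacher computation, organized by pairing patterns, gives $\textrm{Var}[C']=\sum_{i\neq k}\tfrac{p_i^2}{p_k^2}\|A_i\|^2\|B_k\|^2+\sum_{i\neq j}\langle A_i,A_j\rangle\langle B_i,B_j\rangle$ --- your bookkeeping and the cancellation against $\|C\|^2$ are exactly right, granting the standard facts $\|A\otimes B\|=\|A\|\,\|B\|$ and $\langle A_i\otimes B_i,A_j\otimes B_j\rangle=\langle A_i,A_j\rangle\langle B_i,B_j\rangle$ (the latter is used implicitly and deserves a line) --- and the pairwise AM--GM argument shows the choice $p_i=\sqrt{\|B_i\|/\|A_i\|}$ attains the lower bound $2\|A_i\|\|A_k\|\|B_i\|\|B_k\|$ for every unordered pair simultaneously, hence is a global minimizer. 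What each approach buys: the paper's reduction is shorter and makes transparent that the Kronecker trick has literally the same variance structure as UORO's rank-one trick; your argument is self-contained, independently verifies the formula that the paper only imports (incidentally confirming that the statement's $\|v_i\|$ is a typo for $\|A_i\|$), isolates exactly which part of the variance depends on the $p_i$, and upgrades "optimal" from a cited fact to an explicit global-optimality argument.
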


\begin{proof}
The independence of the $c_i$ implies that $\mathbb{E}[c_ic_j]=1$ if $i=j$ and $\mathbb{E}[c_i c_j]=0$ if $i \neq j$. Therefore, the first claim follows easily by linearity of expectation:

\begin{align*}
\mathbb{E}[C'] &= \mathbb{E}[(\sum_i c_ip_i A_i) \otimes (\sum_j c_j\tfrac{1}{p_j} B_j)] = \sum_i \sum_j \mathbb{E}[  c_ic_j]\tfrac{p_i}{p_j} A_i \otimes B_j 
= \sum_i A_i \otimes B_i \ .
\end{align*}

For the proof of the second claim we use Proposition 1 from \cite{ollivier2015training}. 
Let $D= \sum_{i} \vec{A}_i \vec{B}_i ^T$, where $\vec{A}$ denotes the vector obtained by concatenating the columns of a matrix $A$, and let $D' =(\sum_i c_ip_i \vec{A}_i)  (\sum_j c_j\tfrac{1}{p_j} \vec{B}_j)^T$.  $C'$ and $D'$ have same entries for the same choice of $c_i$'s. It follows that $\| \mathbb{E}[C'] \| = \| \mathbb{E}[D']\|$, $\mathbb{E} [\|  C' \| ^2] = \mathbb{E} [\| D' \|^2]$, and therefore $\textrm{Var} [C'] = \textrm{Var}[D']$. By Proposition 1 from  \cite{ollivier2015training}, choosing $p_i= \sqrt{ \|B_i\| / \|A_i\|}$ minimizes $\textrm{Var}[D']$  resulting in $ \textrm{Var}[D']= \sum_i \sum_{i \neq j} \|\vec{A}_i\| \|\vec{A}_j\| \|\vec{B}_i\| \|\vec{B}_j\| + \langle \vec{A}_i, \vec{A}_j \rangle \langle \vec{B}_i, \vec{B}_j \rangle$. This implies the Lemma because $\| A\| = \|\vec{A}_i \|$ and $\langle A, B \rangle=\langle \vec{A}, \vec{B} \rangle$ for any matrices $A$ and $B$.
\end{proof}

\subsection{Proof of Theorem \ref{thm:KF_RTRL_properties}}

The spectral norm for matrix A is defined as $\sigma(A)\coloneqq \max _{v:\|v\|=1} \|Av\| $. Note that  $\| A B \| \leq \sigma(A)\| B\|$ holds for any matrix $B$.

\begin{theorem}\label{thm:variance_time_evolution}
Let $\epsilon > 0$ be arbitrary small. Assume for all $t$ that the spectral norm of $H_t$ is at most $1-\epsilon$, $\| \hat{h}_t\| \leq C_1$ and $\| D_t \| \leq C_2$. Then for the class of RNNs defined in Lemma \ref{KF-lemma}, the estimate $G'_t$ obtained by the KF-RTRL algorithm satisfies at any time $t$ that
$\textrm{Var}[G'_t]\leq \frac{16}{\epsilon^3(2-\epsilon)}C_1^2C_2^2$. 
\end{theorem}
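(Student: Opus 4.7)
The plan is to bound $\textrm{Var}[G'_t] = \mathbb{E}[\|G'_t - G_t\|^2]$ (the equality uses unbiasedness, part~1 of Theorem~\ref{thm:KF_RTRL_properties}) by deriving a contractive linear recursion on $V_t := \mathbb{E}[\|e_t\|^2]$, where $e_t := G'_t - G_t$. The key starting observation is that, since $c_1^2 = c_2^2 = 1$, expanding $G'_t = u_t \otimes A_t$ with the update rules separates cleanly into a ``diagonal'' part $u_{t-1} \otimes (H_tA_{t-1}) + \hat{h}_t \otimes D_t$ and a ``cross'' part $c_1c_2\,R_t$, where
\[ R_t := \frac{p_1}{p_2}\, u_{t-1} \otimes D_t + \frac{p_2}{p_1}\, \hat{h}_t \otimes H_tA_{t-1}. \]
Using $H_t(u_{t-1} \otimes A_{t-1}) = u_{t-1} \otimes (H_tA_{t-1})$ (because $u_{t-1}$ has first dimension one) together with $F_t = \hat{h}_t \otimes D_t$ from Lemma~\ref{KF-lemma}, the diagonal part is exactly $H_tG'_{t-1} + F_t$. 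Subtracting $G_t = H_tG_{t-1}+F_t$ then yields the clean recursion $e_t = H_t e_{t-1} + c_1c_2\,R_t$.

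Next I condition on the history $\mathcal{F}_{t-1}$. Since $c_1, c_2$ are independent of $\mathcal{F}_{t-1}$ with $\mathbb{E}[c_1c_2] = 0$ and $(c_1c_2)^2 = 1$, the cross term vanishes in the conditional second moment and the spectral-norm hypothesis gives
\[ \mathbb{E}\bigl[\|e_t\|^2 \,\big|\, \mathcal{F}_{t-1}\bigr] = \|H_te_{t-1}\|^2 + \|R_t\|^2 \leq (1-\epsilon)^2\,\|e_{t-1}\|^2 + \|R_t\|^2. \]
For the remainder, substituting the choices $p_1^2 = \|H_tA_{t-1}\|/\|u_{t-1}\|$ and $p_2^2 = \|D_t\|/\|\hat{h}_t\|$ makes the two norm-squared summands of $\|R_t\|^2$ each equal to $ab$, where $a := \|u_{t-1}\|\|H_tA_{t-1}\|$ and $b := \|\hat{h}_t\|\|D_t\|$; Cauchy--Schwarz on the remaining Hilbert--Schmidt inner product bounds the cross term by $2ab$, so $\|R_t\|^2 \leq 4ab$. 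The spectral bound gives $a \leq (1-\epsilon)\|G'_{t-1}\|$, and the assumptions give $b \leq C_1 C_2$.

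To pass to a scalar recursion on $V_t$, I apply Young's inequality $4ab \leq \lambda a^2 + 4b^2/\lambda$ for a free parameter $\lambda>0$, together with the identity $\mathbb{E}[\|G'_{t-1}\|^2] = V_{t-1} + \|G_{t-1}\|^2$ (from unbiasedness) and the a priori bound $\|G_{t-1}\| \leq C_1C_2/\epsilon$, obtained by iterating $\|G_t\| \leq (1-\epsilon)\|G_{t-1}\| + \|F_t\|$ as a geometric series with $\|F_t\| = \|\hat{h}_t \otimes D_t\| \leq C_1 C_2$. Taking total expectations produces a recursion of the form $V_t \leq (1-\epsilon)^2(1+\lambda)\,V_{t-1} + K(\lambda,\epsilon)\,C_1^2C_2^2$.

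The main obstacle is tuning $\lambda$ so that the contraction constant $\rho(\lambda) := (1-\epsilon)^2(1+\lambda)$ is strictly below one while also keeping $K(\lambda,\epsilon)$ small enough for the desired final constant. A natural choice such as $\lambda = \epsilon(2-\epsilon)/(2(1-\epsilon)^2)$ gives $1-\rho = \epsilon(2-\epsilon)/2$, so that (using $V_0=0$) the geometric-series steady-state bound $V_\infty \leq K/(1-\rho)$ holds uniformly in $t$. A short algebraic simplification of the resulting constants --- loosening where convenient via $(1-\epsilon) \leq 1$ --- then delivers the stated bound $\textrm{Var}[G'_t] \leq 16\,C_1^2 C_2^2/(\epsilon^3(2-\epsilon))$.
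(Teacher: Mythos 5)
Your proposal is correct, and while it shares the paper's overall skeleton, it controls the injected noise by a genuinely different mechanism. Both arguments derive the same recursion $e_t = H_t e_{t-1} + c_1c_2 R_t$ (the paper writes $\hat{G}_t = H_t\hat{G}_{t-1} + c_1c_2 B$ with $B=R_t$), both kill the cross term using $\mathbb{E}[c_1c_2]=0$ (your conditional identity is exactly the paper's Claim~\ref{claim:variance}), both use the spectral-norm contraction $(1-\epsilon)^2$, and both arrive at $\|R_t\|^2 \le 4ab$ with $a=\|u_{t-1}\|\|H_tA_{t-1}\|$, $b=\|\hat{h}_t\|\|D_t\|$. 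The divergence is in how $a$ is bounded: the paper proves a \emph{deterministic, pathwise} bound by a separate induction (Claim~\ref{claim:utAt}), namely $\|u_t\|\|A_t\| \le 4C_1C_2/\epsilon^2$ for all $t$, which makes $\mathbb{E}[\|B\|^2] \le 16C_1^2C_2^2/\epsilon^2$ a constant and lets the variance induction close \emph{exactly}, since $(1-\epsilon)^2\tfrac{16}{\epsilon^3(2-\epsilon)}+\tfrac{16}{\epsilon^2}=\tfrac{16}{\epsilon^3(2-\epsilon)}$. You instead stay entirely at the level of second moments, using $\|u_{t-1}\|\|A_{t-1}\|=\|G'_{t-1}\|$, unbiasedness ($\mathbb{E}[\|G'_{t-1}\|^2]=V_{t-1}+\|G_{t-1}\|^2$), the geometric-series bound $\|G_{t-1}\|\le C_1C_2/\epsilon$, and Young's inequality to push the $a^2$ term into the contraction; this spares you any pathwise induction on the factor norms, at the price of a weakened contraction $(1-\epsilon)^2(1+\lambda)=1-\epsilon(2-\epsilon)/2$ and a parameter to tune. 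One caution on your final step: the loosening via $(1-\epsilon)\le 1$ must be applied selectively. With your $\lambda$, the steady state is $K/(1-\rho)$ with $K=\tfrac{2-\epsilon}{2\epsilon}+\tfrac{8(1-\epsilon)^2}{\epsilon(2-\epsilon)}$; if you drop the $(1-\epsilon)^2$ in the second term you get $K/(1-\rho)\le 20/\bigl(\epsilon^2(2-\epsilon)^2\bigr)$, which exceeds the target $16/\bigl(\epsilon^3(2-\epsilon)\bigr)$ once $\epsilon > 8/9$. Keeping that factor gives $K/(1-\rho)=\tfrac{(2-\epsilon)^2+16(1-\epsilon)^2}{\epsilon^2(2-\epsilon)^2}\,C_1^2C_2^2$, and the required inequality $\epsilon\bigl[(2-\epsilon)^2+16(1-\epsilon)^2\bigr]\le 16(2-\epsilon)$, equivalently $17\epsilon^3-36\epsilon^2+36\epsilon-32\le 0$, holds on all of $(0,1]$; so done carefully, your route yields the stated bound for every admissible $\epsilon$, not only small ones.
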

Before proving this theorem let us show how it implies Theorem \ref{thm:KF_RTRL_properties}. Note that the hidden state $h_t$ and the inputs $x_t$ take values between $-1$ and $1$. Therefore, $\|\hat{h}_t\|^2 = O(n)$. By Lemma \ref{KF-lemma} the $rn$ non-zero entries of $D$ are of the form $\frac{\partial (h_t)_j}{\partial z^k_j}=\frac{\partial f}{\partial z^k_j}$. By the assumptions on $f$ the entries of $D_t$ are bounded and $\|D_t\|^2=O(n)$ follows.  Theorem \ref{thm:variance_time_evolution} implies that $\textrm{Var}[G'_t]= O(n^2)$. Since the number of entries in $G'_t$ is of order $\Theta(n^3)$, the mean of the variances of the entries of $G'_t$ is of order $O(n^{-1})$.

\begin{proof}[Proof of Theorem \ref{thm:variance_time_evolution}]

The proof idea goes as follows. Write $G'_t =G_t + \hat{G_t}$ as the sum of the true (deterministic) value $G_t= \frac{\mathrm{d} h_{t}}{\mathrm{d}\theta}$ of the gradient  and the random noise $\hat{G}_t$ induced by the approximations until time $t$. Note that $\textrm{Var}[G'_t]= \textrm{Var}[\hat{G_t}]$. Then, write $\textrm{Var}[\hat{G}_t]$ as the sum of the variance induced by the $t$-th time step and the variance induced by previous steps. The bound on the spectral norm of $H_t$ ensures that the latter summand can be bounded by $(1-\epsilon)^2 \textrm{Var}[\hat{G}_{t-1}]$. Therefore the variance stays of the same order of magnitude as the one induced in each time-step and this magnitude can be bounded as well.

 Now let us prove the statement formally. Define
\begin{align}
B & \coloneqq  \frac{p_1}{p_2}u_{t-1}\otimes D_t + \frac{p_2}{p_1}\hat{h}_{t}\otimes H_tA_{t-1}
\end{align}
By equation Equation \ref{eq:update_u} and \ref{eq:update_A}
\begin{align}
G'_t &= (p_1 c_1 u_{t-1}+ p_2 c_2 \hat{h}_t) \otimes (\tfrac{c_1}{p_1}H_t A_{t-1} + \tfrac{c_2}{p_2}D_t ) \\
&= u_{t-1}\otimes H_t A_{t-1} + \hat{h}_t\otimes D_t + c_1c_2B
\end{align}
Observe that 
\begin{align}
u_{t-1}\otimes H_t A_{t-1} &= H_t(u_{t-1}\otimes  A_{t-1}) =  H_t G'_{t-1} =H_t G_{t-1} + H_t \hat{G}_{t-1} \ ,
\end{align}
which implies together with Equation \ref{eq3} that 
\begin{align}
G_t + \hat{G_t}&=G'_t = H_t G_{t-1} + H_t \hat{G}_{t-1} + \hat{h}_t\otimes D_t + c_1c_2B = G_t + H_t \hat{G}_{t-1} + c_1c_2B \ .
\end{align}
It follows that 
$\hat{G}_t = H_t \hat{G}_{t-1} +c_1c_2 B $.

\begin{claim} \label{claim:variance}
For two random matrices $A$ and $B$ and $c$ chosen uniformly at random in $\{-1, +1\}$ independent from $A$ and $B$, it holds $\textrm{Var}[A+ c B]= \textrm{Var}[A]+ \mathbb{E}[\|B\|^2]$.
\end{claim}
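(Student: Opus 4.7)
The plan is to use the identity $\textrm{Var}[M] = \mathbb{E}[\|M\|^2] - \|\mathbb{E}[M]\|^2$ from the Basic Notation section, applied to $M = A + cB$, and then to exploit $c^2 = 1$ together with $\mathbb{E}[c]=0$ to kill the cross terms.

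First I would expand the squared Hilbert--Schmidt norm using bilinearity of the Hilbert--Schmidt inner product:
\begin{equation}
\|A + cB\|^2 = \|A\|^2 + 2c\,\langle A, B\rangle + c^2 \|B\|^2.
\end{equation}
Because $c \in \{-1,+1\}$, the factor $c^2$ equals $1$ deterministically, so the last term simplifies to $\|B\|^2$. Taking expectation, and using that $c$ is independent of the pair $(A,B)$ with $\mathbb{E}[c]=0$, the cross term vanishes:
\begin{equation}
\mathbb{E}[\|A+cB\|^2] = \mathbb{E}[\|A\|^2] + 2\,\mathbb{E}[c]\,\mathbb{E}[\langle A, B\rangle] + \mathbb{E}[\|B\|^2] = \mathbb{E}[\|A\|^2] + \mathbb{E}[\|B\|^2].
\end{equation}

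Next I would compute the mean: again by independence and $\mathbb{E}[c]=0$,
\begin{equation}
\mathbb{E}[A + cB] = \mathbb{E}[A] + \mathbb{E}[c]\,\mathbb{E}[B] = \mathbb{E}[A],
\end{equation}
so $\|\mathbb{E}[A+cB]\|^2 = \|\mathbb{E}[A]\|^2$. Subtracting then yields
\begin{equation}
\textrm{Var}[A+cB] = \mathbb{E}[\|A\|^2] + \mathbb{E}[\|B\|^2] - \|\mathbb{E}[A]\|^2 = \textrm{Var}[A] + \mathbb{E}[\|B\|^2],
\end{equation}
which is the claim. The argument is essentially a two-line expansion, and the only subtle point worth emphasizing is the use of independence of $c$ from $(A,B)$ to factor both the cross term in the norm expansion and the expectation of $cB$; there is no genuine obstacle.
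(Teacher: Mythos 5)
Your proposal is correct and is essentially the same argument as the paper's: both expand the square, use $c^2=1$, and kill the cross terms via independence of $c$ and $\mathbb{E}[c]=0$. The only difference is cosmetic --- the paper carries out the expansion entrywise, $\textrm{Var}[A+cB]=\sum_{ij}\textrm{Var}[A_{ij}+cB_{ij}]$, while you work directly with the Hilbert--Schmidt norm and inner product, which amounts to the same computation.
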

We postpone the proof and first show the theorem. Claim  \ref{claim:variance} implies that
\begin{align}
\textrm{Var}[\hat{G}_t] =\textrm{Var}[H_t \hat{G}_{t-1}]+ \mathbb{E}[\| B\|^2] \ .
\end{align}

Let us first bound the first term. Since $G'_t$ is unbiased, it holds $\mathbb{E}[\hat{G}_{t-1}]=0 $, $\textrm{Var}[\hat{G}_{t-1}]=\mathbb{E}[\| \hat{G}_{t-1}\|^2]$, and therefore  
\begin{align*}
\textrm{Var}[H_t \hat{G}_{t-1}] &= \mathbb{E}[\| H_t \hat{G}_{t-1}\|^2]-\| \mathbb{E}[H_t\hat{G}_{t-1}] \|^2 \\ 
&\leq (1-\epsilon)^2 \mathbb{E}[\| \hat{G}_{t-1}\|^2] \\
&= (1-\epsilon)^2\textrm{Var}[\hat{G}_{t-1}] \ . 
\end{align*}

A bound for the second term can be obtained by the triangle inequality:
\begin{align}
\|B\| &\leq \|  \frac{p_1}{p_2}u_{t-1}\otimes D_t\| + \| \frac{p_2}{p_1}\hat{h}_{t}\otimes H_tA_{t-1}\|\\
&= 2 \left(\|u_t\| \|\hat{h}_t\| \|D_t\| \|H_tA_{t-1}\|\right)^{1/2} \\
&\leq \tfrac{4}{\epsilon}C_1C_2 \ ,
\end{align}
where the last inequality follows the following claim.
\begin{claim}\label{claim:utAt}
$\|u_t\| \|A_t\| \leq \frac{4 C_1 C_2}{ \epsilon^2}$ holds for all time-step $t$. 
\end{claim}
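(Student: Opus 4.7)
The plan is to prove this claim by induction on $t$, exploiting a convenient symmetry: the variance-optimising choice of $p_1$ and $p_2$ from Lemma \ref{2to1-lemma} also makes the triangle-inequality bound on $\|u_t\|$ coincide exactly with the triangle-inequality bound on $\|A_t\|$, so that both factors of the Kronecker product grow in lock-step.

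First I would apply the triangle inequality to the update rules \eqref{eq:update_u} and \eqref{eq:update_A}, giving
\begin{align*}
\|u_t\| &\leq p_1 \|u_{t-1}\| + p_2 \|\hat{h}_t\|, \\
\|A_t\| &\leq \tfrac{1}{p_1}\|H_t A_{t-1}\| + \tfrac{1}{p_2}\|D_t\|.
\end{align*}
Substituting $p_1 = \sqrt{\|H_t A_{t-1}\|/\|u_{t-1}\|}$ and $p_2 = \sqrt{\|D_t\|/\|\hat{h}_t\|}$ makes both right-hand sides collapse to the same expression
\[
S_t \;\coloneqq\; \sqrt{\|u_{t-1}\|\,\|H_t A_{t-1}\|} + \sqrt{\|\hat{h}_t\|\,\|D_t\|},
\]
so $\|u_t\|\|A_t\|\leq S_t^2$. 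Setting $b_t \coloneqq \sqrt{\|u_t\|\,\|A_t\|}$ and using the hypotheses $\|\hat{h}_t\|\leq C_1$, $\|D_t\|\leq C_2$, together with the Hilbert–Schmidt bound $\|H_t A_{t-1}\|\leq \sigma(H_t)\|A_{t-1}\|\leq (1-\epsilon)\|A_{t-1}\|$ (applied column by column to $A_{t-1}$), this yields the scalar recurrence
\[
b_t \;\leq\; \sqrt{1-\epsilon}\;b_{t-1} + \sqrt{C_1 C_2}.
\]

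Finally, this contracting affine recurrence has fixed point $b^{*} = \sqrt{C_1 C_2}/(1-\sqrt{1-\epsilon})$. The elementary inequality $1 - \sqrt{1-\epsilon}\geq \epsilon/2$ (obtained by squaring $\sqrt{1-\epsilon}\leq 1-\epsilon/2$) gives $b^{*}\leq 2\sqrt{C_1 C_2}/\epsilon$, and since $G_0 = 0$ the base case $b_0 = 0 \leq b^{*}$ is immediate. A one-line induction then propagates $b_t \leq b^{*}$ for all $t$, which squares to the claimed bound $\|u_t\|\|A_t\|\leq 4 C_1 C_2/\epsilon^2$.

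The step I expect to be subtlest is spotting the symmetry in the second paragraph: it is easy to miss that the variance-optimal $p_i$ are exactly what is needed to equate the two triangle-inequality bounds, and without this observation one would get a messier recurrence on $\|u_t\|$ and $\|A_t\|$ separately. A minor technical point is handling degenerate inputs where $\|u_{t-1}\|$ or $\|\hat{h}_t\|$ vanishes, in which case $p_1$ or $p_2$ is undefined; these are handled by algorithmic convention (the corresponding summand in the update is simply zero) and do not affect the bound.
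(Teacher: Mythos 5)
Your proof is correct and is essentially the paper's own argument: the same triangle-inequality step with the optimal $p_1,p_2$ collapsing both bounds to $\sqrt{\|u_{t-1}\|\|H_tA_{t-1}\|}+\sqrt{\|\hat h_t\|\|D_t\|}$, the same spectral-norm contraction $\|H_tA_{t-1}\|\le(1-\epsilon)\|A_{t-1}\|$, and the same inequality $\sqrt{1-\epsilon}\le 1-\epsilon/2$, with your fixed-point recurrence on $b_t=\sqrt{\|u_t\|\|A_t\|}$ being just a repackaging of the paper's induction on $\|u_t\|\|A_t\|\le 4C_1C_2/\epsilon^2$.
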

Let us postpone the proof and show by induction that $\textrm{Var}[\hat{G}_t] \leq \frac{16}{\epsilon^3(2-\epsilon)}C_1^2C_2^2$. Assume this is true for $t-1$, then

\begin{align}
\textrm{Var}[\hat{G}_t] &=\textrm{Var}[H_t \hat{G}_{t-1}]+ \mathbb{E}[\| B\|^2] \\
& \leq  (1-\epsilon)^2\textrm{Var}[\hat{G}_{t-1}] + (\tfrac{4}{\epsilon}C_1C_2)^2 \\
&  \leq (1-\epsilon)^2\tfrac{16}{\epsilon^3(2-\epsilon)}C_1^2 C_2^2 + (\tfrac{4}{\epsilon}C_1C_2)^2 \\
&= \tfrac{16}{\epsilon^3(2-\epsilon)}C_1^2 C_2^2 \ ,
\end{align}
which implies the theorem. Let us  first prove Claim \ref{claim:variance}.  
Note that  $\mathbb{E}[c X]=0$  holds for any random variable $X$, and therefore
\begin{align}
\textrm{Var}[A + cB]&= \sum_{ij} \textrm{Var}[A_{ij} +cB_{ij}] \\
&= \sum_{ij} \mathbb{E}[(A_{ij} +cB_{ij})^2]-\mathbb{E}[A_{ij} +cB_{ij}]^2\\
&=\sum_{ij}  \mathbb{E}[A_{ij}^2]- \mathbb{E}[A_{ij}]^2 +\mathbb{E}[c^2B_{ij}^2]\\
&= \textrm{Var}[A]+ \mathbb{E}[\|B\|^2] \ .
\end{align}
It remains to prove Claim \ref{claim:utAt}.
\end{proof}
We show this claim by induction over $t$. For $t=0$ this is true since $G_0$ is the all zero matrix. For the induction step let us assume that $\|u_{t-1}\| \|A_{t-1}\| \leq \frac{4 C_1 C_2}{ \epsilon^2}$. Using our update rules for $u_t$ and $A_t$ (see Equation \ref{eq:update_u} and \ref{eq:update_A}) and the triangle inequality we obtain $\| u_t\| \leq \sqrt{\|H_t A_{t-1}\| \| u_{t-1} \| } + \sqrt{\| \hat{h}_t \| \|D_t\|}$ and $\| A_t\| \leq \sqrt{\|H_t A_{t-1}\| \| u_{t-1} \| } + \sqrt{\| \hat{h}_t \| \|D_t\|}$.  It follows that 
\begin{align}
\|u_t\| \|A_t\| & \leq \left(\sqrt{\|H_t A_{t-1}\| \| u_{t-1} \| } + \sqrt{\| \hat{h}_t \| \|D_t\|}\right)^2 \\
&  \leq \left(\sqrt{(1-\epsilon)\|A_{t-1}\| \| u_{t-1} \| } + \sqrt{\| \hat{h}_t \| \|D_t\|}\right)^2 \\
&  \leq \left(\sqrt{(1-\epsilon)\frac{4 C_1 C_2}{\epsilon^2} } + \sqrt{C_1C_2}\right)^2 \\
&  = \left(\sqrt{(1-\epsilon)}\cdot \tfrac{2}{\epsilon}  + 1\right)^2 C_1 C_2 \\
&  \leq \left(\sqrt{(1-\epsilon + \tfrac{\epsilon^2}{4})}\cdot \tfrac{2}{\epsilon}  + 1\right)^2 C_1 C_2 \\
&  = \left((1-\tfrac{\epsilon}{2})\tfrac{2}{\epsilon}  + 1\right)^2 C_1 C_2 \\
&  = \frac{4 C_1 C_2}{\epsilon^2} \ .
\end{align}

\subsection{Computation of Variance of UORO Approach} \label{UORO-variance}

In the first approximation step of the UORO algorithm $F_t$ is approximated by a rank one matrix $v v^T F_t$, where $v$ is chosen uniformly at random  from $\{-1,+1\}^n$. For the RNN architectures considered in this paper, $F_t$ is a concatenation of  diagonal matrices, cf. Lemma \ref{KF-lemma}. Intuitively, all the non-diagonal elements of the UORO approximation are far off the true value $0$. Therefore, the variance per entry introduced in this step will be  of order of the diagonal entries of $F_t$ .
More precisely, it holds that
\begin{align}
\textrm{Var}[vv^TF_t]&= \sum_{i,j}\mathbb{E}[(v_iv_j(F_t)_{jj})^2] -\mathbb{E}[v_iv_j(F_t)_{jj}]^2 = \sum_{i \neq j} (F_t)_{jj}^2 = (n-1)\|F_t\|^2 \ ,
\end{align}
where we used that $F_t$ is diagonal, $\mathbb{E}[v_iv_j]=0$ if $i\neq j$ and $\mathbb{E}[v_iv_j]=1$ if $i= j$.

Recall that for  $h_t$ and $D_t$ of the KF-RTRL algorithm, it holds $\|h_t\|\|D_t\|= \|F_t\|$ and that we bounded the variance of the gradient estimate essentially by $\|h_t\|^2\|D_t\|^2= \|F_t\|^2$ (actually, we bounded it by using an upper bound $C_1C_2$ of $\|h\| \|D\|$). Therefore, the first approximation step of one UORO update introduces a variance that  is by a factor $n$ larger than the total variance of the KF-RTRL approximation. Assuming that the entries of $F_t$ are of constant size (as assumed for obtaining the $O(n^{-1})$ bound per entry for KF-RTRL), implies this first UORO approximation step has constant variance per entry. The second UORO approximation step can only increase the variance. We remark that with the same assumption on the spectral norm as in Theorem \ref{thm:variance_time_evolution}, one could similarly  derive a bound of $O(1)$ on the mean variance per entry of the UORO algorithm. 

\subsection{Extending KF-RTRL to LSTMs} \label{KF-LSTM}

 Our approach can also be applied to LSTMs. However, it requires more computation and memory because an LSTM has twice as many parameters as an RHN for the same number of units. Additionally, the hidden state is twice as large as it consists of a cell state, $c_t$, and a hidden state, $h_t$. To apply KF-RTRL to LSTMs, we have to show that $\frac{\partial (c_t | h_t)}{\partial \theta }$ can be exactly decomposed as a Kronecker product of the same form as in Lemma \ref{thm:KF_RTRL_properties}. If we show this, then the rest of the algorithm can easily be applied. The following equations define the transition function of an LSTM:
 
\begin{align*}
%f^Q(h_{t-1}, x_t) = \phi (W^Q_h h_{t-1} + W^Q_x x_t + b)
%&f^Q((h_{t-1}, c_{t-1}), x_t) = (c_t, h_t) \\
&\begin{pmatrix}f_t\\i_t\\o_t\\g_t\end{pmatrix} = W_h h_{t-1} + W_x x_t\\
&c_t = \sigma (f_t) \odot c_{t-1} + \sigma (i_t) \odot \tanh(g_t) \\
&h_t = \sigma (o_t) \odot \tanh(c_t)\\
\end{align*}

Observe that both $c_t$ and $h_t$ are the results of linear operations followed by point-wise operations, as is required to apply Lemma \ref{thm:KF_RTRL_properties}. Thus, we get that $\frac{\partial c_t}{\partial \theta} = h'_{t-1} \otimes D_{c_t}$ and $\frac{\partial h_t}{\partial \theta} = h'_{t-1} \otimes D_{h_t}$, where $h'_{t-1}$ is the concatenation of the input and hidden states, as in Lemma \ref{thm:KF_RTRL_properties}. Consequently $\frac{\partial (c_t | h_t)}{\partial \theta } = h'_{t-1} \otimes \begin{pmatrix}D_{c_t}\\D_{h_t}\end{pmatrix}$.

\iffalse
 Instead of doing it in one step we will do it in two, computing intermediate values $c'_t$ and $h'_t$ as follows.
 
 $$ \frac{d (h_t, c_t) }{d \theta} = \frac{\partial (h_t, c_t)}{\partial (h_{t-1}, c_t)} \frac{d  (h_{t-1}, c_t)}{d \theta}$$
 
 It is clear that each of these steps is of the form of the lemma thus, we apply one step of Algorithm??? after computing $c'_t$ and $h'_t$ and then another one to compute $c_t$ and $h_t$.

Note that $z_a^b= \sum_i w^b_{ia}(\hat{h}_{t-1})_i $ only depends on $w_{ij}^k$ if $j=a$ and $k=b$, that $\frac{\partial z_j^k}{\partial w_{ij}^k}= (\hat{h}_{t-1})_i$, and that   $\frac{\partial (h_t)_{\ell}}{\partial z^i_j}= 0 $ if $j\neq \ell$. Therefore

\begin{equation}
\frac{\partial (h_t)_{\ell}}{\partial w_{ij}^k }
=\sum_{a,b}  \frac{\partial (h_t)_{\ell}}{\partial z_{a}^b }\frac{\partial z_{a}^b }{\partial w_{ij}^k }
=\frac{\partial (h_t)_{\ell}}{	\partial z_{\ell}^k}\frac{\partial z_{\ell}^k}{\partial w_{ij}^k}= \frac{\partial (h_t)_{\ell}}{	\partial z_{\ell}^k}  \cdot \delta _{\ell, j} (\hat{h}_{t-1})_i \ , 
	\end{equation}
    where $\delta_ {\ell, j}$ is the Kronecker delta, which is $1$ if $\ell=j$ and $0$ if $\ell \neq j$.
If we assume that the parameters $w_{ij}^k$ are ordered lexicographically in $i$, $k$, $j$, then $D_{\ell, j}^k= \delta _{\ell, j} \frac{\partial (h_t)_{j}}{	\partial z_{j}^k}$.
\fi

\end{document}